\documentclass{article}
\PassOptionsToPackage{numbers, compress}{natbib}
\usepackage[preprint]{neurips_2026}

\usepackage[utf8]{inputenc} % allow utf-8 input
\usepackage[T1]{fontenc}    % use 8-bit T1 fonts
\usepackage{hyperref}       % hyperlinks
\usepackage{url}            % simple URL typesetting
\usepackage{booktabs}       % professional-quality tables
\usepackage{amsfonts}       % blackboard math symbols
\usepackage{nicefrac}       % compact symbols for 1/2, etc.
\usepackage{microtype}      % microtypography
\usepackage{xcolor}         % colors
\usepackage{amsmath}
\usepackage{amssymb}
\usepackage{comment}
\usepackage{algorithm}
\usepackage{algorithmicx}
\usepackage{algpseudocode}
\usepackage{amsthm}
\usepackage{mathrsfs}
\usepackage{booktabs}
\usepackage{multirow}
\usepackage{graphicx}
\usepackage{caption}

\newcommand{\M}{\mathcal{M}}
\newcommand{\E}{\mathbb{E}}

\title{UniVer: A Unified Perspective for \\Multi-step and Multi-draft Speculative Decoding}

\author{
Yepeng Weng$^{1,2}$\thanks{Equal Contribution} \quad Qiao Hu$^{3*}$\thanks{Corresponding author.} \quad Takehisa Yairi$^{1}$\\
$^1$ The University of Tokyo \quad $^2$Lenovo AI Technology Center \\ $^3$ National Center for Mathematics and Interdisciplinary Sciences (NCMIS), AMSS, CAS \\
\texttt{yweng@g.ecc.u-tokyo.ac.jp, huqiao2020@amss.ac.cn}}

\begin{document}

\maketitle

\begin{abstract}
Speculative decoding accelerates Large Language Models via draft-then-verify, where verification can be framed as an Optimal Transport (OT) problem. Existing approaches typically handle multi-draft and multi-step aspects in isolation, applying either flat OT to single-step drafts or per-token rejection sampling to tree-structured candidates. This separation leaves the joint regime (where multi-step dependencies meet multi-draft branching) poorly optimized, as local verification rules fail to exploit the coupling between horizontal and vertical dimensions of candidate trees. In this paper, we propose a unified perspective that casts tree-based verification as a conditional OT problem. Our key insight is that vertical dependencies can be abstracted through prefix acceptance probabilities, which act as dynamic scaling factors to actively guide horizontal draft selection. Based on this principle, we introduce UniVer, a verification algorithm that jointly optimizes across tree levels by composing local optimal transport plans under prefix constraints. We prove that UniVer remains lossless and achieves the optimal acceptance rate under the proposed conditional framework. Extensive experiments across different tasks and models demonstrate that UniVer improves acceptance length by 4.2\% to 8.5\% over standard recursive rejection sampling without replacement, while maintaining exact distributional alignment with the target model.
\end{abstract}

\section{Introduction}\label{sec:intro}

Modern Large Language Models (LLMs) suffer from high inference latency due to their autoregressive nature. Speculative decoding~\cite{leviathan2023fast, chen2023accelerating} mitigates this problem without compromising output quality: a lightweight draft model predicts future tokens, which are then verified by the target model in parallel. From an algorithmic perspective, a fundamental challenge in speculative decoding is to design an optimal verification strategy that maximizes the expected acceptance rate while maintaining statistical fidelity to the original model (\emph{i.e.,} ensuring the output distribution is identical to that of vanilla autoregressive decoding).

Recent studies improve acceptance rates by extending the naive (local) verification strategy along two orthogonal dimensions: optimizing horizontal selection among multiple candidates and managing vertical dependencies across multiple steps.
On one hand, multi-draft methods such as SpecTr~\cite{sun2023spectr} and Greedy method~\cite{hu2025towards} select multiple draft tokens at each step and optimize the acceptance rates from an Optimal Transport (OT) perspective. However, these approaches remain vertically "myopic", treating each generation step as an independent event and failing to account for the sequential dependencies in tree-structured drafts.
On the other hand, multi-step methods~\cite{sun2025block, weng2025traversal} focus on optimizing verification vertically across sequences or trees. Yet, these methods are horizontally limited: they rely on local, per-token rejection sampling, which is suboptimal compared to coordinated, multi-draft selection.
Consequently, existing paradigms suffer from a structural trade-off: they either optimize across multiple candidates at a single depth or across multiple steps for a single candidate, but fail to achieve joint optimality across both dimensions (Figure \ref{fig:overview}).

\begin{figure*}[t]
\centering
\includegraphics[width=0.93\textwidth]{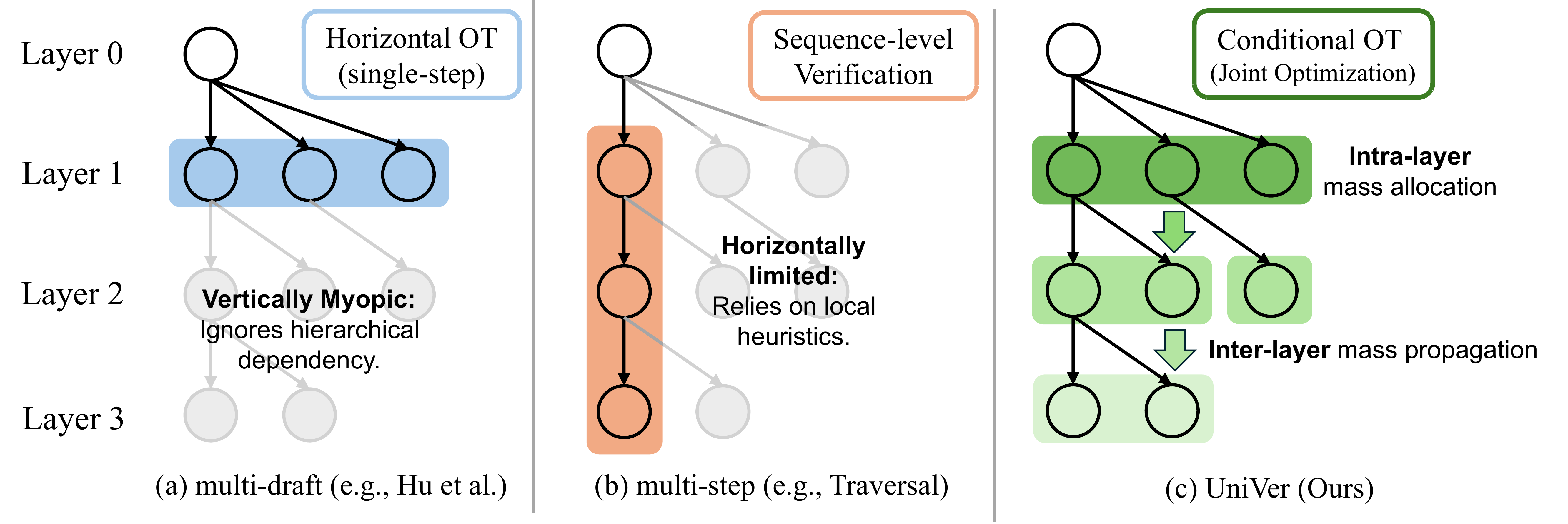}
\caption{Conceptual illustration of verification paradigms. \textbf{(a) Multi-Draft approaches} (e.g., \citet{hu2025towards}) optimize horizontal selection via OT but ignore vertical dependencies (\emph{vertically myopic}). \textbf{(b) Multi-Step approaches} (e.g., Traversal Verification~\cite{weng2025traversal}) optimize vertical acceptance but rely on local heuristics for horizontal selection (\emph{horizontally limited}). \textbf{(c) UniVer} unifies both dimensions via Conditional OT, enabling parallel intra-layer mass allocation and inter-layer mass propagation.}
\label{fig:overview}
\end{figure*}

This discrepancy raises a fundamental question: \emph{How to unify multi-draft selection and multi-step coordination into a single, principled verification framework?} From our perspective, the core difficulty lies in the combinatorial complexity of a candidate tree, where the exponential growth of potential paths makes direct joint optimal transport highly non-trivial. To bypass this complexity, we shift the focus from solving the global OT problem to a recursive local-global coordination. By introducing the \emph{prefix acceptance probability} of a node $v$ (denoted as $\tilde{p}_v$) as the dynamic scaling factor, we actively reformulate horizontal verification into a conditional optimal transport problem. This enables a unified strategy that couples multi-draft selection with cross-level dependencies while maintaining strict losslessness.

Guided by this principle, we propose UniVer (Unified Verification), a principled framework that decomposes tree-structured verification into a top-down allocation phase and a post-order decision phase. Unlike prior approaches~\cite{li2024eagle, weng2025traversal} that rely on Recursive Rejection Sampling without replacement (RRSw), UniVer enables parallel computation of acceptance plans for all branches at the same depth. By leveraging conditional OT, our algorithm determines the acceptance mass of child nodes within a single probability space, effectively eliminating the sequential dependency inherent in RRSw. As a result, UniVer provides a lossless yet efficient paradigm for tree speculative decoding. Empirical evaluations demonstrate that UniVer consistently outperforms single-step OT methods such as Greedy method, and achieves up to a 8.5\% increase in acceptance length over Vanilla RRSw.

Our contributions are summarized as follows:
\begin{itemize}
    \item \textbf{Unified Framework.} We propose a framework named UniVer, which unifies multi-step and multi-draft speculative decoding. By identifying prefix acceptance probabilities $p_v$ as dynamic scaling factors, we reformulate tree-based verification as a series of conditional optimal transport problems.
    \item \textbf{Theoretical Foundation.} We prove that UniVer achieves the conditional optimum for any $p_v \in (0, 1]$ under a specific sampling strategy (\emph{i.e.,} Greedy selection~\cite{hu2025towards}), attaining the theoretical upper bound of acceptance under prefix constraints. We further demonstrate that this bound is strictly non-inferior to that of the Greedy method, establishing UniVer as a principled generalization of single-step OT optimality to multi-step tree verification.
    \item \textbf{Empirical Validation.} UniVer streamlines verification into a single-pass computation by eliminating the sequential dependencies inherent in RRSw and Traversal. Extensive experiments across various LLMs and tree architectures show that UniVer consistently improves acceptance length by 4.2\% to 8.5\% over standard RRSw under temperature 1.0 while maintaining losslessness.
\end{itemize}

\section{Preliminaries}

\subsection{Speculative Decoding and OT}
\label{sec:prelim-ot}
Let $\{\M_b,\M_s\}$ be a pair of probabilistic models over vocabulary $\Sigma$. For any given accepted prefix $\mathbf{x}$, we refer to $\M_b(\cdot\mid\mathbf{x})$ as the target distribution and $\M_s(\cdot\mid\mathbf{x})$ as the draft distribution, respectively.

\paragraph{Single-Draft Case.}
In standard speculative decoding~\cite{leviathan2023fast, chen2023accelerating}, a single draft token $x \sim \M_s(\cdot\mid\mathbf{x})$ is verified against the target distribution $\M_b$. The acceptance rate is determined by the probability of alignment between $\M_s$ and $\M_b$. Formally, this corresponds to an optimal transport problem, where the optimal acceptance rate $\alpha^*$ is given by the total variation distance (\emph{i.e.,} the maximum achievable overlap between the target and draft distribution):
\begin{equation}
    \alpha^* = \sum_{x \in \Sigma} \min\{\M_b(x\mid\mathbf{x}), \M_s(x\mid\mathbf{x})\}.
\end{equation}

\paragraph{Local Heuristics for Multi-Draft.}
When $n$ draft tokens are generated, a straightforward extension is to apply the single-token verification sequentially. Methods such as Recursive Rejection Sampling (RRS) and its without replacement variant (RRSw)~\cite{miao2024specinfer, Chen2024Sequoia, Jeon2024RSD, Yang2024MCSD} adopt this strategy: they verify candidates one by one using local pairwise comparisons between $\M_b$ and $\M_s$, re-normalizing the residual distribution after each rejection. While RRSw prevents redundant sampling of identical tokens, it remains a \emph{local heuristic} that processes candidates independently rather than coordinating their acceptance globally.

\paragraph{Multi-Draft with Optimal Transport.}
In contrast, some recent works~\cite{sun2023spectr, hu2025towards, Khisti2025MultiDraftSS} frame multi-draft verification as an OT problem. \citet{hu2025towards} propose a Greedy method. They first deterministically select $m-1$ drafts with highest probabilities from $\M_s$, then sample the final draft from the residual $\M_s^\neg$. Under this sampling strategy, they derive a closed-form verification strategy that achieves the theoretical upper bound of the acceptance rate:
\begin{equation}\label{eq:greedy-acc}
\alpha^*_{\text{Greedy}}(\M_b, \M_s) = \sum_{i \in \text{Top}_{m-1}(\mathcal{M}_s)} \mathcal{M}_b(i) + \sum_{i \in \Sigma} \min\{\mathcal{M}_b(i), \mathcal{M}_s^{\neg}(i)\},
\end{equation}
where $\M_s^{\neg}$ denotes the residual draft distribution. Unlike RRSw, this OT-based approach works under the above sampling structure (${\rm Top}_{m-1}$ plus one residual sample) and coordinates the acceptance of all $m$ candidates in a unified probability space. Under such sampling scheme, Eq.~\eqref{eq:greedy-acc} maximizes the marginal acceptance probability at each step.
% Unlike RRSw, this OT-based approach coordinates the acceptance of all $m$ candidates within a unified probability space, maximizing the marginal acceptance probability at each step.

\subsection{Multi-step Strategies}
\label{sec:prelim-block}

Consider a draft model generating a candidate tree $T$. For clarity, we use $\mathbf{v} = (v_0, \dots, v_{\gamma})$ to represent a root-to-node chain $\mathbf{v}\subseteq T$, where $v_0$ denotes the root node and the chain length is $\gamma+1$.

\paragraph{Local Heuristics.}
The baseline approach applies vanilla verification at each step independently. For any fixed chain $\textbf{v}=(v_0, \dots, v_{\gamma})$, let the acceptance ratio be $r(v_i) = \M_b(v_i)/\M_s(v_i)$ at position $v_i$. Under local heuristics, the acceptance probability is computed independently at each position as $\min\{r(v_i), 1\}$. Consequently, the actual acceptance probability $p_{v_k}^{\text{Local}} $ of each prefix chain of $\textbf{v}$ is given by the product of individual, locally truncated probabilities:
\begin{equation*}
p_{v_k}^{\text{Local}} = \prod_{i=1}^{k} \min\{r(v_i), 1\},\;1\leq k\leq\gamma.
\end{equation*}
This \emph{post-hoc aggregation} treats each verification step as independent, failing to exploit the coupling of probability mass along the sequence, which leads to suboptimal cumulative acceptance rates.

\paragraph{Block Verification.}
For single-chain decoding, \citet{sun2025block} propose Block Verification. Unlike local verification, which independently truncates the local density ratio at $1.0$ before accumulation, Block Verification computes the prefix acceptance probability $p_{v_k}^{\text{Block}}$ by first performing cumulative multiplication and then applying truncation:
\begin{equation*}
p_{v_k}^{\text{Block}} = \min\left\{ r(v_k) \cdot p_{v_{k-1}}^{\text{Block}}, 1 \right\},\;1\leq k\leq \gamma,
\end{equation*}
where $p_{v_0}^{\rm Block} = 1$. This mechanism enables a high density ratio ($r(v_{k}) > 1$) to recharge the prefix probability diminished by prior steps. In contrast, local verification follows $p_{v_k}^{\text{Local}} = \min\{r(v_{k}), 1\} \cdot p_{v_{k-1}}^{\text{Local}}$, which permanently penalizes the chain for any local mismatch. Consequently, Block Verification yields a strictly non-inferior cumulative acceptance probability, as $p_{v_k}^{\text{Block}} \geq p_{v_k}^{\text{Local}}$ holds for all $1\leq k\leq \gamma$.

\paragraph{Traversal Verification.}
Traversal Verification \cite{weng2025traversal} extend Block Verification to tree structures through a \emph{post-order traversal} strategy. This approach leverages the joint probability for vertical optimization, but handles the horizontal dimension via RRSw. Specifically, Traversal Verification processes the tree bottom-up by iteratively verifying candidates against the residual target distribution, falling back to sibling nodes or the parent upon rejection. In other words, it lacks joint optimization across siblings, failing to exploit the horizontal coupling.

\section{Method}
\label{sec:method}

\subsection{Overview}
\label{sec:overview}
To bridge the gap between local horizontal selection and suboptimal vertical coordination in existing methods, we present UniVer, a unified framework for multi-draft and multi-step speculative decoding. As presented in Figure~\ref{fig:two-stage}, UniVer operates through a two-stage verification process that jointly optimizes horizontal and vertical optimization.

\noindent
\begin{minipage}[t]{0.43\textwidth}
\paragraph{Allocation Phase: Top-Down Probability Propagation.}
Starting from the root node with initial probability $\tilde{p}_{root} = 1$, we propagate the acceptance mass layer by layer. For each layer, the acceptance plans for all independent branches are computed in parallel. For each node $v$, we treat $\tilde{p}_v$ as the \emph{effective prefix acceptance probability} and compute the conditional optimal transport plan between the scaled target distribution $\tilde{p}_v \cdot \M_b(\cdot \mid v)$ and the draft distribution $\M_s(\cdot \mid v)$. This yields marginal acceptance probabilities $p_v(u)$ for each child $u \in \mathcal{C}(v)$, which are then normalized into conditional acceptance probabilities $\tilde{p}_u$ for the next layer. This propagation continues until all leaf nodes are processed.
\vspace{5pt}
\paragraph{Decision Phase: Post-Order Tree Traversal.}
With all acceptance probabilities pre-computed, we perform a standard post-order tree traversal (as defined in Traversal Verification~\cite{weng2025traversal}). For each node $v\in T$, we draw $\eta \sim U(0,1)$ and accept $v$ if $\eta < \tilde{p}_v$ (leaf) or $\eta < \tilde{p}_v^{\rm res}$ (non-leaf). If a node is rejected, we proceed to the next node in traversal order until an acceptance or all nodes are rejected. Upon acceptance at a leaf, we sample the next token from $\M_b$; upon acceptance at a non-leaf node, we sample from the residual distribution.
\end{minipage}
\hfill
\begin{minipage}[t]{0.53\textwidth}
\vspace{-18pt}
\begin{algorithm}[H]
\caption{UniVer}
\label{alg:univer}
\small
\begin{algorithmic}[1]
\Require Draft tree $T$ with root $r$, target model $\M_b$, draft model $\M_s$
\Ensure Accepted sequence $\mathbf{v}$ and next token $y$

\State // \textbf{Allocation Phase}
\State Initialize $\tilde{p}_r \gets 1$
\For{each layer $\ell = 0$ to $D-1$}
    \For{each node $v$ at layer $\ell$ (in parallel)}
        \State Compute marginal probabilities $p_v(u_i)$ for all $u_i \in \mathcal{C}(v)$ via Eqs.~\ref{eq:greedy-sample}--\ref{eq:greedy-res} \label{line:greedy}
        \State Compute rejection probability $p_v(\neg v)$ (Eq.~\ref{eq:reject})
        \State // Conditional normalization for traversal order
        \State $m \gets |\mathcal{C}(v)|$
        \For{$j = 1$ to $m$}
            \State $\tilde{p}_{u_j} \gets \frac{p_v(u_j)}{1 - \sum_{i=1}^{j-1} p_v(u_i)}$ \Comment{Conditional prob.} \label{line:cond}
        \EndFor
        \State $\tilde{p}_v^{\text{res}} \gets 1-\frac{p_v(\neg v)}{1 - \sum_{i=1}^{m} p_v(u_i)}$ \Comment{Fallback prob.} \label{line:residual}
    \EndFor
\EndFor

\State // \textbf{Decision Phase}
\State Generate $\boldsymbol{\eta} \sim U(0,1)^{|T|}$ %\Comment{Pre-sample randomness for all nodes}
\For{each node $v$ in post-order traversal sequence}
    \If{$v$ is leaf and $\eta_v < \tilde{p}_v$} %\Comment{$\tilde{p}_v$ is effective acceptance prob.}
        \State Sample $y \sim \M_b(\cdot \mid v)$ and \Return $(v, y)$
    \EndIf
    \If{$v$ is non-leaf and $\eta_v < \tilde{p}_v^{\rm res}$}
        \State Sample $y\in \Sigma \setminus \mathcal{C}(v)$ from normalized residual distribution
        $\frac{p_v(y)}{\sum_{u\notin\mathcal{C}(v)}p_v(u)}$
        and \Return $(v, y)$
    \EndIf
\EndFor
\end{algorithmic}
\end{algorithm}
\end{minipage}

\normalsize
\subsection{UniVer Algorithm}
\label{sec:univer}

We now introduce the technical details of the UniVer pipeline (Algorithm~\ref{alg:univer}), including its sampling strategy and computation of the core acceptance probability.

\begin{figure}[t]
\centering
\includegraphics[width=\textwidth]{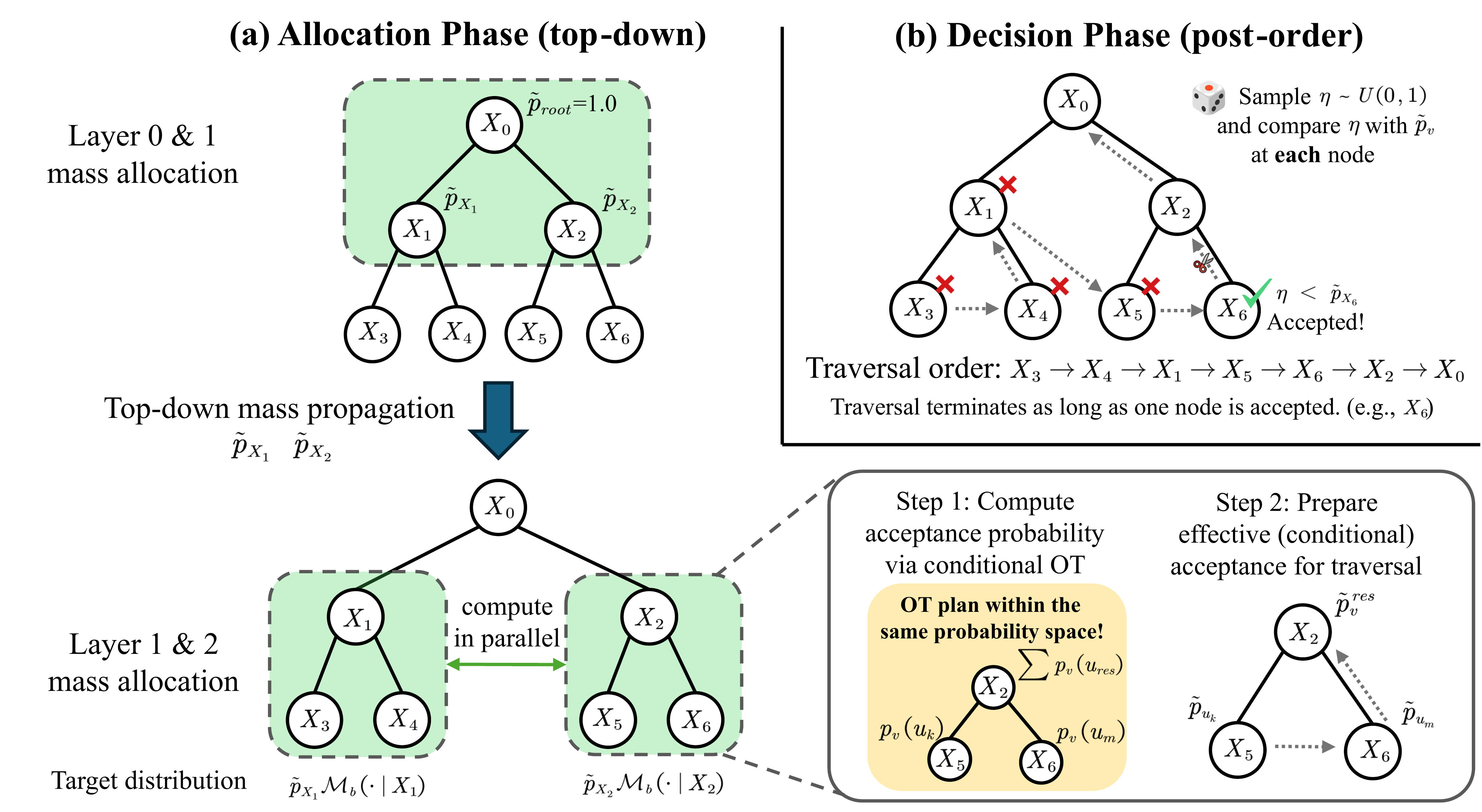}
\caption{Two-stage verification framework of \textbf{UniVer}. \textbf{(a) Allocation Phase}: Acceptance probabilities are propagated in a top-down and layer-by-layer manner. For each branch within a layer, marginal probabilities are computed in parallel using conditional optimal transport, then normalized into conditional probabilities used for the next layer’s acceptance probability computation.
\textbf{(b) Decision Phase}: A single-pass post-order traversal is executed. For each node $X_i$, a random variable $\eta$ is compared against $\tilde{p}_{X_{i}}$ (for leaves) or $\tilde{p}_{X_{i}}^{\rm res}$ (for non-leaves); the process terminates upon acceptance.}
\label{fig:two-stage}
\end{figure}

\paragraph{UniVer Acceptance Probability Computation.}
For any parent node $v$ with effective conditional acceptance probability $\tilde{p}_v$ (precomputed recursively), its children nodes $\mathcal{C}(v) = \{u_1, \dots, u_m\}$ are selected by Greedy sampling strategy (i.e., ${\rm Top}_{m-1}$ plus $u_m\sim\M_s^\neg$).

For clarity, define the normalization factor:
\begin{equation}\label{eq:norm-factor}
Z_v = 1 - \tilde{p}_v + \sum_{x\in\Sigma}\bigl[\tilde{p}_v \M_b(x \mid v) - \M_s^\neg(x \mid v)\bigr]_+.
\end{equation}
The marginal acceptance probabilities are given by:
\begin{equation}\label{eq:greedy-sample}
p_v(u_m) = \min\left\{1, \frac{\tilde{p}_v \M_b(u_m \mid v)}{\M_s^\neg(u_m \mid v)} \right\}
\end{equation}
\begin{equation}\label{eq:greedy-top}
p_v(u_k) = \bigl[\tilde{p}_v \M_b(u_k \mid v) - \M_s^\neg(u_k \mid v)\bigr]_+ \times \frac{1-p_v(u_m)}{Z_v}, \quad \forall 1\le k\le m-1, 
\end{equation}
\begin{equation}\label{eq:greedy-res}
p_v(u_{\text{res}}) = \bigl[\tilde{p}_v \M_b(u_{\text{res}} \mid v) - \M_s^\neg(u_{\text{res}} \mid v)\bigr]_+ \times \frac{1-p_v(u_m)}{Z_v}, \quad \forall u_{\text{res}} \in \Sigma \setminus \mathcal{C}(v), 
\end{equation}
Then, the remain probability of rejecting $v$ is 
\begin{equation}\label{eq:reject}
p_v(\neg v) = 1 - \sum_{u\in\Sigma}p_v(u) =\frac{(1-p_v(u_m))(1 - \tilde{p}_v)}{Z_v}. 
\end{equation}
ensuring proper probability allocation.

\newtheorem{theorem}{Theorem}
\newtheorem{definition}[theorem]{Definition}
\newtheorem{corollary}{Corollary}
\section{Theoretical Guarantees}
\label{sec:theory}

This section establishes the theoretical foundations of UniVer, proving three key properties:
\begin{itemize}
    \item \textbf{Losslessness.} The algorithm preserves the target model distribution $\M_b$.
    \item \textbf{Conditional Optimality.} UniVer achieves the optimal acceptance rate for each single-layer subtree of $T$, given a prefix acceptance probability and its sampling strategy.
    \item \textbf{Superiority over Greedy.} The expected acceptance length of our UniVer is never worse than that of the Greedy method \cite{hu2025towards}.
\end{itemize}

\paragraph{Preliminaries.}
We first formalize some key notations used in our analysis. Let $p_{\rm draft}$ be the probability distribution of generating a specific draft tree $T$ under the draft model $\M_s$, satisfying $\sum_T p_{\rm draft}(T) = 1$. For a verification algorithm $\mathscr{A}$, we write $\mathscr{A} = o'$ to indicate that the algorithm outputs exactly the sequence $o'$. Note that $o'$  except the last one form a valid root-to-node path in $T$, while its last element is a sample token $y$. Each \emph{prefix acceptance probability} $\tilde{p}_v \in [0,1]$ has been evaluated recursively by Line~10 of Algorithm \ref{alg:univer}.

Detailed proofs of the following theoretical results are referred to Appendix~\ref{sec:appendix_proofs}.

\begin{definition}[Locally Lossless Tree]\label{def:loc-lossless}
Let $T$ be a draft tree equipped with acceptance probabilities $\{\tilde{p}_v\}_{v \in T}$ and $\{p_v(u)\}_{u\in\mathcal{C}(v)}$ generated by the \emph{Allocation Phase}. 
If for any non-leaf node $v\in T$, the following property holds:
\begin{equation}\label{eq:loc-lossless}
\mathbb{E}_{\mathcal{C}(v)}\big[p_v(t)\big] = \tilde{p}_v \cdot \M_b(t \mid v),\;\;\forall t\in \Sigma,
\end{equation}
where $\mathbb{E}_{\mathcal{C}(v)}[\cdot]$ denotes expectation over the randomness of generating the children set $\mathcal{C}(v)$,
then $T$ is called a \textbf{locally lossless tree}.
\end{definition}

\begin{definition}[Lossless Verification]\label{def:lossless}
A verification algorithm $\mathscr{A}(T, \M_s, \M_b)$ is \textbf{lossless} if for any sequence $o = (o_0, o_1, \dots, o_L) \in \Sigma^{L+1}$ with $o_0=r$ being the root, the following equation holds:
\begin{equation}
\sum_{T} p_{\rm draft}(T) \sum_{o' \subseteq o} \frac{\M_b(o)\Pr[\mathscr{A} = o' \mid T]}{\M_b(o')}  = \M_b(o),
\label{eq:lossless-def}
\end{equation}
where the inner summation ranges over all prefix sequences $o' \subseteq o$ (including $o'=o$ itself), and $\frac{\M_b(o)}{\M_b(o')}$ is the conditional probability of generating the remaining tokens $\{o\setminus o'\}$ by $\M_b(\cdot\mid o')$.
\end{definition}

\begin{theorem}[Local losslessness of UniVer]\label{thm:loc-lossless-Uni}
The draft tree equipped with the acceptance probabilities generated in \emph{Allocation Phase} of UniVer verification is a locally lossless tree. 
\end{theorem}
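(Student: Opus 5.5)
The plan is to verify Eq.~\eqref{eq:loc-lossless} directly, one node at a time. Fix a non-leaf $v$ and abbreviate $q(t) := \tilde{p}_v \M_b(t\mid v)$ and $s(t) := \M_s^\neg(t\mid v)$, so that $\sum_t q(t) = \tilde{p}_v$ and $\sum_t s(t)=1$. Under the Greedy sampling strategy the set $S=\mathrm{Top}_{m-1}(\M_s)$ is deterministic. The only randomness in $\mathcal{C}(v)$ is therefore $u_m\sim s$, and I take $s$ to be supported on $\Sigma\setminus S$, i.e.\ $s(t)=0$ for $t\in S$. The quantity $Z_v$ in Eq.~\eqref{eq:norm-factor} does not depend on $u_m$. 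Hence the expectation $\mathbb{E}_{\mathcal{C}(v)}$ reduces to an average over $u_m$ alone.

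\textbf{Step 1 (unified formula).} For any fixed $t\in\Sigma$, Eqs.~\eqref{eq:greedy-sample}--\eqref{eq:greedy-res} give a single formula. If $u_m=t$, then $p_v(t)=\min\{1,q(t)/s(t)\}$. If $u_m\neq t$ (whether $t\in S$ or $t$ is a residual token), then $p_v(t)=[q(t)-s(t)]_+\,(1-p_v(u_m))/Z_v$. This gives
\begin{equation*}
\mathbb{E}_{u_m}[p_v(t)] = \min\{s(t),q(t)\} + \frac{[q(t)-s(t)]_+}{Z_v}\,\mathbb{E}_{u_m}\bigl[(1-p_v(u_m))\mathbf{1}\{u_m\neq t\}\bigr].
\end{equation*}

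\textbf{Step 2 (dropping the indicator).} If $[q(t)-s(t)]_+>0$ and $u_m=t$, then $q(t)>s(t)$, so $p_v(u_m)=1$ and the factor $1-p_v(u_m)$ vanishes. If instead $[q(t)-s(t)]_+=0$, the whole second term is zero. In either case the indicator can be dropped, and the expectation becomes $\mathbb{E}[1-p_v(u_m)] = 1-\sum_x \min\{s(x),q(x)\} = \sum_x [s(x)-q(x)]_+$.

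\textbf{Step 3 (the key identity).} Since $\sum_x (q-s)(x)=\tilde{p}_v-1$, we have
\begin{equation*}
\sum_x[s(x)-q(x)]_+ = 1-\tilde{p}_v+\sum_x[q(x)-s(x)]_+ = Z_v .
\end{equation*}
The ratio in Step~1 is therefore $1$, and $\mathbb{E}[p_v(t)] = \min\{s(t),q(t)\}+[q(t)-s(t)]_+ = q(t)$, which is exactly Eq.~\eqref{eq:loc-lossless}. As a byproduct, the same computation confirms the closed form of $p_v(\neg v)$ in Eq.~\eqref{eq:reject}.

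\textbf{Main obstacle.} The real content is recognizing that $Z_v$ equals the expected unused mass $\mathbb{E}[1-p_v(u_m)]$ (Step~3), together with handling the degenerate cases. The case $Z_v=0$ forces $\tilde{p}_v=1$ and $q=s$. Then $p_v(u_m)=1$ almost surely and the second term is read as $0$ (a $0/0$ convention), which the algorithm should adopt. Tokens with $s(t)=0$, in particular $t\in S$, are covered by the same formula, with $\min\{s(t),q(t)\}=0$. Finally, since $\tilde{p}_v\in[0,1]$ holds by induction over the layers, the quantities above are valid probabilities.
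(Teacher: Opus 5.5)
Your proof is correct and follows essentially the same route as the paper's. Both reduce $\mathbb{E}_{\mathcal{C}(v)}$ to an average over $u_m$ and rest on the identity $\mathbb{E}_{u_m}[1-p_v(u_m)]=Z_v$. Your unified formula with the indicator-dropping step simply merges the paper's two cases ($t\in H_v$ versus $t\notin H_v$; for the latter the paper computes the excluded term explicitly as $Z_v+\min\{0,q(t)-s(t)\}$). You also treat the degenerate case $Z_v=0$, which the paper leaves implicit.
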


\begin{theorem}[General Losslessness]\label{thm:general-lossless}
For any \textbf{two-stage} verification algorithm $\mathscr{A}(T, \M_s, \M_b)$ consisting of Allocation and Decision Phases, if the draft tree generated in the Allocation Phase is \textbf{locally lossless}, then the verification $\mathscr{A}$ is \textbf{lossless}.
\end{theorem}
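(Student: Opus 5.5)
We prove Theorem~\ref{thm:general-lossless} by induction on the depth of the draft tree, using a stronger, subtree-level statement that matches the post-order Decision Phase. For a node $v$ with effective prefix probability $\tilde{p}_v$, let $T_v$ be the subtree rooted at $v$, drawn by the draft process conditioned on the prefix $v$. We claim:
\begin{equation*}
\Pr_{T_v,\eta}\bigl[\text{the traversal of } T_v \text{ returns } o' \bigr] \text{ summed as in Eq.~\eqref{eq:lossless-def}, conditioned on prefix } v,\; = \tilde{p}_v\,\M_b(o\mid v)
\end{equation*}
for every continuation $o$ of $v$. In words, the traversal restricted to $T_v$ accepts somewhere inside $T_v$ with total mass $\tilde{p}_v$, and conditional on that acceptance the emitted continuation is distributed as $\M_b(\cdot\mid v)$. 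The theorem is the case $v=r$, because $\tilde{p}_r=1$ and $\M_b(o\mid r)=\M_b(o)$.

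\textbf{Base case.} If $v$ is a leaf, it is accepted with probability $\tilde{p}_v$. The next token is then sampled from $\M_b(\cdot\mid v)$, and the remaining tokens are completed by $\M_b$ according to the definition. The claim follows immediately.

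\textbf{Inductive step.} Let $v$ be a non-leaf with children $u_1,\dots,u_m$, visited in this order by the post-order traversal. The traversal first explores the subtree $T_{u_1}$. By induction, conditional on reaching it, $T_{u_1}$ accepts with probability $\tilde{p}_{u_1}$ and then emits $u_1$ followed by an $\M_b(\cdot\mid u_1)$ continuation. If it rejects, the traversal moves on to $T_{u_2}$, and so on. Finally it tests $v$ itself with probability $\tilde{p}_v^{\rm res}$.

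The key bookkeeping is the telescoping from the conditional normalization in Line~\ref{line:cond}. The probability that $T_{u_j}$ is the first subtree to accept is
\begin{equation*}
\tilde{p}_{u_j}\prod_{i<j}(1-\tilde{p}_{u_i})=p_v(u_j),
\end{equation*}
because $\prod_{i<j}(1-\tilde{p}_{u_i}) = 1-\sum_{i<j}p_v(u_i)$. Similarly, Line~\ref{line:residual} gives the probability of reaching $v$ and accepting it as
\begin{equation*}
\Bigl(1-\sum_{i=1}^m p_v(u_i)\Bigr)\tilde{p}_v^{\rm res} = \sum_{u\notin\mathcal{C}(v)}p_v(u).
\end{equation*}
When $v$ is accepted, the residual sampling step picks each $y\notin\mathcal{C}(v)$ with probability exactly $p_v(y)$ in total.

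Combining these, for a fixed children set the probability that the traversal of $T_v$ emits next token $t$ and then an $\M_b$-distributed continuation is exactly $p_v(t)$. This holds whether $t$ is a child or a residual token, and the continuation claim uses the inductive hypothesis applied to each $T_{u_j}$. Taking expectation over the random generation of $\mathcal{C}(v)$ and applying local losslessness, Eq.~\eqref{eq:loc-lossless}, gives $\tilde{p}_v\M_b(t\mid v)$. Multiplying by $\M_b(o\mid v,t)$ and summing over the ways $o$ decomposes yields $\tilde{p}_v\M_b(o\mid v)$, which is the claim at $v$.

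\textbf{Main obstacle.} The delicate step is making the inductive hypothesis interact correctly with the randomness of deeper subtrees. Two points need care. First, the subtrees $T_{u_j}$ are generated only after $\mathcal{C}(v)$ is fixed, and their $\tilde{p}_{u_j}$ depend on $\mathcal{C}(v)$. Second, the event that a subtree rejects must have probability exactly $1-\tilde{p}_{u_j}$ after averaging over that subtree's own draft randomness and $\eta$'s, so that the telescoping product above holds in expectation. I would handle both by conditioning on $\mathcal{C}(v)$ and using the independence of the sibling subtrees' generation and their $\eta$'s. This makes the expectation factor as $\prod_{i<j}\E[1-\text{acc}(T_{u_i})]=\prod_{i<j}(1-\tilde{p}_{u_i})$. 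The total-acceptance part of the inductive claim is obtained by summing $\tilde{p}_u\M_b(\cdot\mid u)$ over continuations, and local losslessness at $u$ is used there.
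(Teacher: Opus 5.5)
Your proof is correct, but it is organized differently from the paper's.

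The paper inducts on the number of parent nodes of the whole tree. It picks a deepest parent $v$, all of whose children are leaves, and forms $T^{\rm sub}=T\setminus\mathcal{C}(v)$. It then shows that, after averaging over $\mathcal{C}(v)$, the left-hand side of Eq.~\eqref{eq:lossless-def} for $T$ equals that for $T^{\rm sub}$. This is done by splitting on whether the accepting node precedes $\mathcal{C}(v)$, lies in $\mathcal{C}(v)\cup\{v\}$, or comes after $v$; the last case uses $\E_{\mathcal{C}(v)}[p_v(\neg v)]=1-\tilde{p}_v$.

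You instead strengthen the hypothesis to a subtree-level invariant: every $T_v$ behaves, in expectation, like a leaf accepted with probability $\tilde{p}_v$ that then emits an $\M_b(\cdot\mid v)$ continuation. You then compose sibling subtrees directly through the telescoping identity $\tilde{p}_{u_j}\prod_{i<j}(1-\tilde{p}_{u_i})=p_v(u_j)$ and the residual step.

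The core fact is the same in both arguments: a locally lossless fan collapses to a leaf with acceptance mass $\tilde{p}_v$.

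\emph{What your route buys.} Stating the invariant as an equality of sub-probability measures on the $\M_b$-completed output makes the role of the post-order traversal explicit. It also replaces the paper's somewhat informal before/within/after coupling with a clean recursion, and it gives the theorem at the root immediately since $\tilde{p}_r=1$.

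\emph{What it costs.} The telescoping must hold in expectation over nested random subtrees, so you explicitly need conditional independence of sibling subtrees and their $\eta$'s given $\mathcal{C}(v)$. This holds for per-node independent drafting with i.i.d.\ $\boldsymbol{\eta}$.

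\emph{What the paper's route buys.} Its contraction only ever reasons about a fan of leaves, so it never handles nested subtrees. It still relies on the same independence structure implicitly, through $\E_T[\cdot]=\E_{T^{\rm sub}}\E_{\mathcal{C}(v)}[\cdot]$ and the use of local losslessness conditional on $T^{\rm sub}$.

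Both arguments also implicitly take $o$ at least as long as the longest possible output. Your measure formulation makes that caveat easier to state correctly than the prefix sum in Eq.~\eqref{eq:lossless-def}.
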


\begin{corollary}[Losslessness of UniVer]\label{thm:lossless}
From Theorems~\ref{thm:loc-lossless-Uni} and \ref{thm:general-lossless}, UniVer produces tokens are consistent with the exact target distribution $\M_b$, i.e., UniVer is \textbf{strictly lossless}.
\end{corollary}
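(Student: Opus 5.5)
The corollary is stated as an immediate consequence of two earlier results, so the plan is a composition argument. We do not re-derive the distributional identity; we only check that UniVer satisfies the hypotheses of Theorem~\ref{thm:general-lossless}, and that the conclusion of that theorem is the claimed statement.

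\textbf{Step 1: UniVer is a two-stage algorithm.} Algorithm~\ref{alg:univer} splits cleanly into an Allocation Phase and a Decision Phase.
\begin{itemize}
\item The Allocation Phase (lines computing $p_v(u)$, $p_v(\neg v)$, $\tilde p_u$, $\tilde p_v^{\rm res}$) depends only on $T$, $\M_s$ and $\M_b$.
\item The Decision Phase is the post-order traversal of Traversal Verification, which uses these quantities through the uniforms $\eta_v$.
\item Upon acceptance, the Decision Phase samples $y$ either from $\M_b(\cdot\mid v)$ at a leaf, or from the normalized residual $p_v(\cdot)$ restricted to $\Sigma\setminus\mathcal C(v)$ at an internal node.
\end{itemize}
This is exactly the class of algorithms covered by Theorem~\ref{thm:general-lossless}. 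The one point to confirm is that the Decision Phase of UniVer coincides with the one assumed in that theorem, namely that the fallback rules use the conditional normalizations of lines~\ref{line:cond} and~\ref{line:residual}. I would state this explicitly.

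\textbf{Step 2: the hypothesis holds.} Theorem~\ref{thm:loc-lossless-Uni} gives that the tree produced by UniVer's Allocation Phase is locally lossless, i.e.\ Eq.~\eqref{eq:loc-lossless} holds at every non-leaf $v$. (Its content is that averaging the Greedy-selected children $\mathrm{Top}_{m-1}$ together with $u_m\sim\M_s^\neg$ against Eqs.~\eqref{eq:greedy-sample}--\eqref{eq:greedy-res} recovers $\tilde p_v\M_b(t\mid v)$.)

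\textbf{Step 3: conclude.} Theorem~\ref{thm:general-lossless} now yields Eq.~\eqref{eq:lossless-def} for every sequence $o$. This says that the output $o'$, extended by continued sampling from $\M_b(\cdot\mid o')$, has law $\M_b(o)$. That is the meaning of ``produces tokens consistent with $\M_b$'', so UniVer is lossless.

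\textbf{Main obstacle.} The only possible gap is in Step~1: checking that the probabilities UniVer feeds to the Decision Phase are well defined, meaning denominators such as $1-\sum_{i<j}p_v(u_i)$ and $Z_v$ are nonzero, or are handled by convention when they vanish. These degenerate cases occur with probability zero in the relevant branch, or correspond to $\tilde p=0$ nodes that are never accepted. I would dispose of them by that convention, after which the corollary follows directly.
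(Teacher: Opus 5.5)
Your proposal is correct and is essentially the paper's argument: the corollary is Theorem~\ref{thm:loc-lossless-Uni} (UniVer's Allocation Phase yields a locally lossless tree) plugged into Theorem~\ref{thm:general-lossless}. The proof of that theorem uses UniVer's own Decision Phase rules, namely the telescoping identity $\tilde{p}_{u_i}\prod_{j<i}(1-\tilde{p}_{u_j}) = p_v(u_i)$ and the residual fallback, so your Step~1 check goes through; your remarks on degenerate denominators go slightly beyond what the paper writes but leave the argument unchanged.
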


\begin{theorem}[Conditional Optimality]\label{thm:optimality}
Let $T$ be any locally lossless tree satisfying Definition~\ref{def:loc-lossless} and let $v\in T$ be a non-leaf node with effective acceptance probability $\tilde{p}_v$. Then UniVer achieves the maximum conditional acceptance rate for the children $\mathcal{C}(v)=\{u_1,\dots,u_m\}$ (over the randomness of $u_m\sim\M_s^\neg(\cdot\mid v)$), which equals:
\begin{equation}\label{eq:cond-optimal}
\begin{split}
&\alpha^*_{\rm UniVer} (\tilde{p}_v\M_b(\cdot \mid v), \M_s(\cdot \mid v))\\= &
\sum_{u \in {\rm Top}_{m-1}(\M_s(\cdot \mid v))} \tilde{p}_v\M_b(u \mid v) + \sum_{u \in \Sigma} \min\{\tilde{p}_v\M_b(u \mid v), \M_s^{\neg}(u \mid v)\}.   
\end{split}
\end{equation}
\end{theorem}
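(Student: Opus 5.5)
Write $P(t)=\tilde{p}_v\M_b(t\mid v)$, $q(t)=\M_s^\neg(t\mid v)$, and $S={\rm Top}_{m-1}(\M_s(\cdot\mid v))$. The proof has two parts.

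\textbf{Upper bound.} Any verification rule that keeps the tree locally lossless must satisfy Eq.~\eqref{eq:loc-lossless}. The children $u_1,\dots,u_{m-1}$ are deterministic, and only $u_m\sim q$ is random. The conditional acceptance rate is $\mathbb{E}_{u_m}\bigl[\sum_{k=1}^m p_v(u_k)\bigr]$, and I split it into two pieces.

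For a token $t\in S$, it is a child in every realization. So its expected acceptance mass equals $\mathbb{E}[p_v(t)]$, which Eq.~\eqref{eq:loc-lossless} forces to be exactly $P(t)$. Summing over $S$ gives $\sum_{t\in S}P(t)$.

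The sampled child $u_m$ is drawn from $q$, which is supported on $\Sigma\setminus S$. The mass accepted on $u_m$ at token $t$ is at most $q(t)$, because the acceptance probability is at most $1$ and $t$ appears as $u_m$ with probability $q(t)$. It is also at most $\mathbb{E}[p_v(t)]=P(t)$, by local losslessness and nonnegativity. This is the standard coupling bound: the accepted mass is a sub-coupling of $P$ and $q$ on the diagonal. Summing over $t$ bounds this piece by $\sum_t\min\{P(t),q(t)\}$.

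Adding the two pieces gives the right-hand side of Eq.~\eqref{eq:cond-optimal} as an upper bound. The only care needed is to state the constraint correctly: $\sum_u p_v(u)\le 1$ for each realization, $p_v(u)\ge 0$, and rejection mass is allowed.

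\textbf{Achievability.} I compute UniVer's expected acceptance from Eqs.~\eqref{eq:greedy-sample}--\eqref{eq:greedy-top}. The sampled child contributes
\[
\mathbb{E}[p_v(u_m)]=\sum_t q(t)\min\{1,P(t)/q(t)\}=\sum_t\min\{P(t),q(t)\}.
\]
For $t\in S$, the contribution is $\mathbb{E}[p_v(t)]=[P(t)-q(t)]_+\,\mathbb{E}[1-p_v(u_m)]/Z_v$.

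Since $q(t)=0$ on $S$, we have $[P(t)-q(t)]_+=P(t)$ there. It remains to show $\mathbb{E}[1-p_v(u_m)]=Z_v$:
\[
\mathbb{E}[1-p_v(u_m)]=1-\sum_t\min\{P,q\}=1-\sum_t P+\sum_t[P-q]_+=1-\tilde{p}_v+\sum_t[P-q]_+,
\]
which is exactly Eq.~\eqref{eq:norm-factor}. Hence $\mathbb{E}[p_v(t)]=P(t)$ for every $t\in S$, and UniVer attains the bound.

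\textbf{Consistency checks.} One should also confirm that the allocation is a valid sub-probability, so that UniVer is feasible in the optimization class. This follows from Eq.~\eqref{eq:reject} being nonnegative, which holds because $Z_v\ge 1-\tilde{p}_v$ and $\tilde{p}_v\le 1$. Local losslessness itself is supplied by Theorem~\ref{thm:loc-lossless-Uni}.

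\textbf{Main obstacle.} The delicate point is the formal statement of the upper bound. The argument must use only the marginal constraint Eq.~\eqref{eq:loc-lossless} together with per-realization feasibility, and it must justify that the $\min\{P,q\}$ bound applies to the randomly sampled child even though the deterministic children may absorb mass for the same token $t\notin S$. In fact they cannot, because the deterministic children lie in $S$, which is disjoint from the support of $q$. This disjointness is what makes the two pieces of the bound separate cleanly.
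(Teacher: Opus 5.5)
Your proposal is correct and follows essentially the same route as the paper. The top-$(m-1)$ tokens are pinned to $\tilde{p}_v\M_b(t\mid v)$ by local losslessness, the sampled child's accepted mass is bounded token-by-token by $\min\{\tilde{p}_v\M_b(t\mid v),\M_s^\neg(t\mid v)\}$ through the row and column marginal constraints of what the paper calls the Scaled-OT problem, and UniVer attains both pieces. The differences are cosmetic: you check top-token achievability directly via $\mathbb{E}[1-p_v(u_m)]=Z_v$, where the paper cites local losslessness, whose proof contains that same computation; and you state explicitly that ${\rm Top}_{m-1}$ is disjoint from the support of $\M_s^\neg$, which the paper uses only implicitly.
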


\begin{theorem}\label{thm:univer_vs_greedy}
For any draft tree $T\sim p_{\rm draft}$, let $N_{\rm UniVer}(T)$ and $N_{\rm Greedy}(T)$ denote the acceptance lengths of UniVer and the Greedy method \cite{hu2025towards}, respectively. Then UniVer exhibits the following superiority:
\begin{equation}\label{eq:superiority}
\E_{T\sim p_{\rm draft}}[N_{\rm UniVer}(T)] \geq \E_{T\sim p_{\rm draft}}[N_{\rm Greedy}(T)].
\end{equation}
\end{theorem}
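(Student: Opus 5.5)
The plan is to write the expected acceptance length as a sum over nodes of marginal acceptance probabilities, and then compare UniVer with Greedy node by node via an induction over depth.

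\textbf{Step 1: length as a sum of node marginals.} For a fixed tree $T$ and any two-stage algorithm, write
\[
N(T)=\sum_{v\in T,\,v\neq r}\mathbf{1}[v \text{ lies on the accepted path}].
\]
Taking expectations, the expected length is the sum over nodes of the probability $q_v$ that $v$ is accepted as part of the output path. For the Greedy method applied per layer, the prefix mass is $q^{\rm G}_v$. Its children receive acceptance masses $q^{\rm G}_v\cdot p^{\rm G}_v(u)$, where $p^{\rm G}_v$ is the single-step plan of \citet{hu2025towards} between $\M_b(\cdot\mid v)$ and $\M_s(\cdot\mid v)$. For UniVer, the analogous recursion is $q_u=p_v(u)$, where $p_v(u)$ is computed from the scaled target $\tilde{p}_v\M_b(\cdot\mid v)$. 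Here I would unfold the conditional normalization of Line~\ref{line:cond} and the post-order traversal to identify the unconditional probability that the output path passes through $u$ with the allocated marginal. This identification should already follow from the proof of Theorem~\ref{thm:general-lossless}.

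\textbf{Step 2: expected-mass induction on depth.} Averaging over the randomness of $\mathcal{C}(v)$, the quantity that matters at depth $\ell+1$ is
\[
A_v=\E_{\mathcal{C}(v)}\Bigl[\sum_{u\in\mathcal{C}(v)}q_u\Bigr].
\]
By Theorem~\ref{thm:optimality}, UniVer gives $A_v=\alpha^*_{\rm UniVer}(\tilde{p}_v\M_b,\M_s)$, which is the maximum over all lossless plans whose prefix constraint is $\tilde p_v$. The Greedy plan, applied after reaching $v$ with mass $q^{\rm G}_v$, is itself a feasible conditional plan: scaling the single-step Greedy plan by $q^{\rm G}_v$ keeps local losslessness (Definition~\ref{def:loc-lossless}) with prefix $q^{\rm G}_v$.

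The induction hypothesis is that, layer by layer, UniVer's distribution of prefix mass dominates Greedy's. Under that hypothesis, optimality gives $A_v^{\rm UniVer}\ge A_v^{\rm Greedy}$ at every node. Summing over layers and over $p_{\rm draft}$ then yields \eqref{eq:superiority}.

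\textbf{Main obstacle.} Pointwise domination $\tilde{p}_u\ge q^{\rm G}_u$ at every child need not hold. UniVer may shift mass between siblings, since the optimum in Theorem~\ref{thm:optimality} concerns a sum over children, not each child separately. The induction therefore has to be carried on a quantity that is monotone and suitably concave in the prefix mass.

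I would first show that the value function
\[
F_v(\tilde p)=\text{optimal expected accepted length of the subtree at } v \text{ given prefix } \tilde p
\]
is nondecreasing in $\tilde p$. This should follow because enlarging $\tilde p$ only enlarges the feasible set: $\min\{\tilde p\M_b,\M_s^\neg\}$ and $\tilde p\M_b$ are monotone in $\tilde p$.

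Then I would argue by backward induction from the leaves, taking $F_v$ as the optimum over all locally lossless plans in the subtree. UniVer attains $F_v$, because each layer solves the conditional OT exactly and, by monotonicity, maximizing children's mass is compatible with maximizing their subtrees. Greedy is merely one feasible locally lossless plan, so it achieves at most $F_v(1)$ at the root.

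If the compatibility claim fails in general, I would instead fall back on a direct layerwise coupling. Since Greedy starts each child with a prefix at most what UniVer can allocate, one can take Greedy's own child masses as a feasible point of UniVer's conditional problem and conclude via the sum-form optimality of Theorem~\ref{thm:optimality}.
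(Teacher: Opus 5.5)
\textbf{There is a genuine gap: the argument does not close.}

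\emph{The main route.} It rests on the claim that UniVer attains $F_v$, the optimum over all locally lossless multi-layer plans. This does not follow from Theorem~\ref{thm:optimality}, which only says that at a single node with fixed $\tilde p_v$, UniVer maximizes the expected total mass passed to the children. For the deterministic ${\rm Top}_{m-1}$ children, local losslessness already pins $\E[p_v(t)]=\tilde p_v\M_b(t\mid v)$. The only remaining freedom is how $p_v(t)$, and the reach probabilities of later siblings, fluctuate with $u_m$. Monotonicity of $F$ says nothing about that.

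Nothing shows UniVer is optimal across layers, and the paper never claims it. UniVer's top-token masses satisfy $p_v(t)\propto 1-p_v(u_m)$ and so fluctuate with $u_m$. Meanwhile a child's downstream value, e.g.\ $\alpha^*_{\rm UniVer}(\tilde p\M_b,\M_s)$, is concave in $\tilde p$. Per-layer optimality of the total mass therefore does not control the multi-layer objective.

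Your monotonicity argument is also off. The constraint $\E[p_v(t)]=\tilde p\M_b(t\mid v)$ is an equality, so increasing $\tilde p$ does not enlarge the feasible set.

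\emph{The fallback.} It re-asserts exactly the prefix domination you correctly flagged as false. Moreover, a one-layer bound with a fixed $\tilde p_v$ cannot compare two algorithms whose prefixes differ node by node.

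\emph{Step~1.} It misidentifies the node marginals. By Line~\ref{line:cond}, $\tilde p_v$ is conditional on the earlier siblings' subtrees having been rejected. Hence the probability that the output passes through $u\in\mathcal{C}(v)$ is, on average over those subtrees, $\Pr(\text{reach }T_v)\,p_v(u)=q_v\,p_v(u)/\tilde p_v$, not $p_v(u)$.

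\textbf{The missing idea} (which is how the paper proceeds) is to prove a stronger statement that is linear in the prefix. For modified UniVer with an arbitrary root value $\tilde p_r\in[0,1]$, show
\[
\E[N_{\rm Mod}(T)]\ge\tilde p_r\,\E[N_{\rm Greedy}(T)],
\]
by induction on the number of parent nodes.

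\emph{Base case.} Since $\min\{\tilde p a,b\}\ge\tilde p\min\{a,b\}$, we get $\alpha^*_{\rm UniVer}(\tilde p_r\M_b,\M_s)\ge\tilde p_r\,\alpha^*_{\rm Greedy}(\M_b,\M_s)$.

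\emph{Inductive step.} Split at the first non-leaf child $v_j$ of the root into $T^{\rm new}$ (descendants of $v_j$ removed) and the subtree $T_{v_j}$. This gives
\[
\E[N_{\rm Mod}(T)]=\E[N_{\rm Mod}(T^{\rm new})]+\E\bigl[\Pr(\text{reach }v_j)\,\E[N_{\rm Mod}(T_{v_j})\mid v_j]\bigr].
\]
Apply the hypothesis to $T^{\rm new}$ and to $T_{v_j}$. Because $\Pr(\text{reach }v_j)\,\tilde p_{v_j}=p_r(v_j)$, the second term is bounded below by $\E[p_r(v_j)\,G(v_j)]$, where $G(t)=\E[N_{\rm Greedy}(T_t)\mid t]$. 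Crucially, $G(t)$ depends only on the token $t$, since Greedy's behavior below a node ignores the incoming mass.

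The comparison is now per token:
\begin{itemize}
\item For a deterministic $v_j$, $G(v_j)$ is a constant and $\E[p_r(v_j)]=\tilde p_r\M_b(v_j\mid r)=\tilde p_r\,\E[p_r^{\rm Greedy}(v_j)]$. UniVer's fluctuations are therefore harmless.
\item For the sampled $v_m$, pointwise $p_r(u)=\min\{1,\tilde p_r\M_b(u\mid r)/\M_s^\neg(u\mid r)\}\ge\tilde p_r\,p_r^{\rm Greedy}(u)$.
\end{itemize}
Setting $\tilde p_r=1$ yields the theorem. This route needs no value functions, no global optimality and no concavity, and it sidesteps the sibling mass-shifting obstacle you identified.
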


\section{Experiments}
\label{sec:experiments}

\subsection{Experimental Setup}

\paragraph{Datasets.} 
We conduct the experiments on Spec-Bench \cite{Specbench}, following the setup of Traversal Verification~\cite{weng2025traversal}. Spec-Bench encompasses six distinct domains, each with 80 representative samples: multi-turn conversation on MT-Bench~\cite{mt-bench}, translation on WMT14 DE-EN~\cite{wmt14}, summarization on CNN/Daily Mail~\cite{cnndm}, question answering on Natural Questions~\cite{natural_questions}, mathematical reasoning on GSM8K~\cite{gsm8k}, and retrieval-augmented generation on DPR~\cite{DPR}.

\begin{table}[t]
\centering
\caption{Acceptance length ($\tau$) and throughput on Vicuna-7B-v1.3. Autoregressive throughput baseline: 35.9 tokens/s. Best results are highlighted in \textbf{bold}.}
\small
\setlength{\tabcolsep}{2pt}
\begin{tabular}{@{}llcccccc|c|cc@{}}
\toprule
\multicolumn{2}{@{}l}{\textbf{Method}} & \textbf{MT} & \textbf{Trans.} & \textbf{Summ.} & \textbf{QA} & \textbf{Math} & \textbf{RAG} & \textbf{Avg. $\tau$ ($\Delta$)} & \textbf{Avg. TPS} \\
\midrule
\multirow{2}{*}{\textbf{RRSw-based}} 
 & RRSw & 3.42{\tiny ±0.03} & 2.73{\tiny ±0.03} & 2.97{\tiny ±0.02} & 2.72{\tiny ±0.01} & 3.47{\tiny ±0.06} & 3.05{\tiny ±0.01} & 3.06{\tiny ±0.02} (0.0\%) & 67.5{\tiny ±0.2} \\
 & Traversal & 3.49{\tiny ±0.03} & 2.75{\tiny ±0.04} & 3.03{\tiny ±0.02} & 2.79{\tiny ±0.03} & 3.55{\tiny ±0.06} & 3.15{\tiny ±0.06} & 3.13{\tiny ±0.02} (2.3\%$\uparrow$) & 68.9{\tiny ±0.5} \\
\midrule
\multirow{2}{*}{\textbf{OT-based}} 
 & Greedy & 3.56{\tiny ±0.04} & 2.80{\tiny ±0.03} & 3.12{\tiny ±0.03} & 2.87{\tiny ±0.05} & 3.66{\tiny ±0.06} & 3.24{\tiny ±0.03} & 3.21{\tiny ±0.02} (4.9\%$\uparrow$) & 70.4{\tiny ±0.6} \\
 & UniVer & \textbf{3.68{\tiny ±0.02}} & \textbf{2.83{\tiny ±0.04}} & \textbf{3.21{\tiny ±0.04}} & \textbf{2.94{\tiny ±0.02}} & \textbf{3.76{\tiny ±0.02}} & \textbf{3.32{\tiny ±0.05}} & \textbf{3.29{\tiny ±0.01}} (7.5\%$\uparrow$) & \textbf{72.2{\tiny ±0.5}} \\
\bottomrule
\end{tabular}
\label{tab:main-results}
\end{table}

\begin{table}[!t]
\centering
\caption{Acceptance length ($\tau$) and throughput on Llama3.1-8B-Instruct. Autoregressive throughput baseline: 34.5 tokens/s. Best results are highlighted in \textbf{bold}.}
\small
\setlength{\tabcolsep}{2pt}
\begin{tabular}{@{}llcccccc|c|cc@{}}
\toprule
\multicolumn{2}{@{}l}{\textbf{Method}} & \textbf{MT} & \textbf{Trans.} & \textbf{Summ.} & \textbf{QA} & \textbf{Math} & \textbf{RAG} & \textbf{Avg. $\tau$ ($\Delta$)} & \textbf{Avg. TPS} \\
\midrule
\multirow{2}{*}{\textbf{RRSw-based}} 
 & RRSw & 2.96{\tiny ±0.03} & 2.31{\tiny ±0.03} & 2.56{\tiny ±0.01} & 2.33{\tiny ±0.01} & 3.31{\tiny ±0.06} & 2.86{\tiny ±0.04} & 2.72{\tiny ±0.02} (0.0\%) & 48.5{\tiny ±0.2} \\
 & Traversal & 3.05{\tiny ±0.02} & 2.34{\tiny ±0.06} & 2.60{\tiny ±0.01} & 2.40{\tiny ±0.05} & 3.37{\tiny ±0.05} & 2.94{\tiny ±0.04} & 2.78{\tiny ±0.02} (2.2\%$\uparrow$) & 49.5{\tiny ±0.4} \\
\midrule
\multirow{2}{*}{\textbf{OT-based}} 
 & Greedy & 3.09{\tiny ±0.01} & 2.54{\tiny ±0.04} & 2.70{\tiny ±0.02} & 2.46{\tiny ±0.02} & 3.45{\tiny ±0.03} & 3.04{\tiny ±0.06} & 2.88{\tiny ±0.02} (5.9\%$\uparrow$) & 50.6{\tiny ±0.4} \\
 & UniVer & \textbf{3.19{\tiny ±0.02}} & \textbf{2.57{\tiny ±0.06}} & \textbf{2.77{\tiny ±0.01}} & \textbf{2.57{\tiny ±0.04}} & \textbf{3.51{\tiny ±0.03}} & \textbf{3.11{\tiny ±0.04}} & \textbf{2.95{\tiny ±0.01}} (8.5\%$\uparrow$) & \textbf{52.2{\tiny ±0.1}} \\
\bottomrule
\end{tabular}
\label{tab:llama3-results}
\end{table}
\normalsize

\paragraph{Models.} 
We mainly conduct experiments on the Vicuna~\cite{mt-bench} model family, using EAGLE~\cite{li2024eagle} as the draft model. This pairing is widely adopted in previous research~\cite{sun2024spechub, weng2025traversal, hu2025towards}. We also include experiments on other models such as Llama3.1-8B-Instruct~\cite{grattafiori2024llama3} to validate its generalization. Due to space limits, we do not include all results in the main text. Please refer to Appendix~\ref{sec:add_results} for comprehensive experiments with different drafter structures and model sizes.

\paragraph{Implementation and Comparisons.} 
We implement UniVer and baseline methods based on the EAGLE~\cite{li2024eagle} open-source repository. All experiments are conducted on NVIDIA RTX A6000 48G GPUs, with each configuration running on three different random seeds. We report the \emph{acceptance length $\tau$} (average tokens generated per cycle) by mean±std as the primary metric, which reflects the theoretical efficiency of the verification algorithm. We also measure the practical throughput (Tokens Per Second, TPS) for a comprehensive comparison. We compare UniVer against existing verification methods spanning two paradigms:

\begin{enumerate}
    \item \textbf{RRSw-based:} (1) \textbf{\emph{Vanilla RRSw}}: Standard recursive rejection sampling without replacement applied independently at each node, as implemented in EAGLE~\cite{li2024eagle}; (2) \textbf{\emph{Traversal}}: The Traversal Verification algorithm~\cite{weng2025traversal} which extends Block Verification to trees using RRSw for horizontal draft selection.
    \item \textbf{OT-based:} (1) \textbf{\emph{Greedy method}}~\cite{hu2025towards}: Single-step OT with Greedy draft selection; (2) \textbf{\emph{UniVer}} (Ours): Our proposed unified framework that extends OT-based verification to tree structures.
\end{enumerate}

\subsection{Overall Performance}

Table~\ref{tab:main-results} presents the results on Vicuna-7B-v1.3 using a balanced binary tree of depth 5 (32 leaf nodes) under temperature 1.0. UniVer achieves a $\tau$ of 3.29, representing a 7.5\% improvement over the Vanilla RRSw baseline. Notably, Traversal Verification, which extends Block Verification to trees using RRSw, only marginally outperforms Vanilla RRSw (2.3\%), suggesting that bottom-up traversal alone provides limited benefit when horizontal draft selection remains suboptimal. In contrast, UniVer achieves a 7.5\% higher $\tau$ over the vanilla baseline and a 2.5\% gain over Greedy method~\cite{hu2025towards}.

\begin{figure}[!h]
\begin{minipage}[t]{0.55\linewidth}
\vspace{0pt}
Figure~\ref{fig:per-depth} demonstrates the per-depth acceptance rates on MT-bench. At depth 0, Greedy and UniVer both achieve 82.6\%, outperforming RRSw-based methods through coordinated horizontal selection. All methods exhibit a characteristic drop at depth 1, consistent with the known degradation of EAGLE draft model alignment beyond the first token \cite{zhang2025hass, weng2025coral}. As depth increases, RRSw degrades steadily while Traversal stabilizes around 75\% via vertical probability recharging but plateaus. Greedy enjoys strong early-layer acceptance, but shares similar decay patterns like RRSw. UniVer maintains acceptance above 77\% in all depths, confirming that it successfully propagates the probability mass across depths and benefits from horizontal-vertical joint optimization.
\end{minipage}
\hfill
\begin{minipage}[t]{0.42\linewidth}
\vspace{0pt}
\centering

\includegraphics[width=\linewidth]{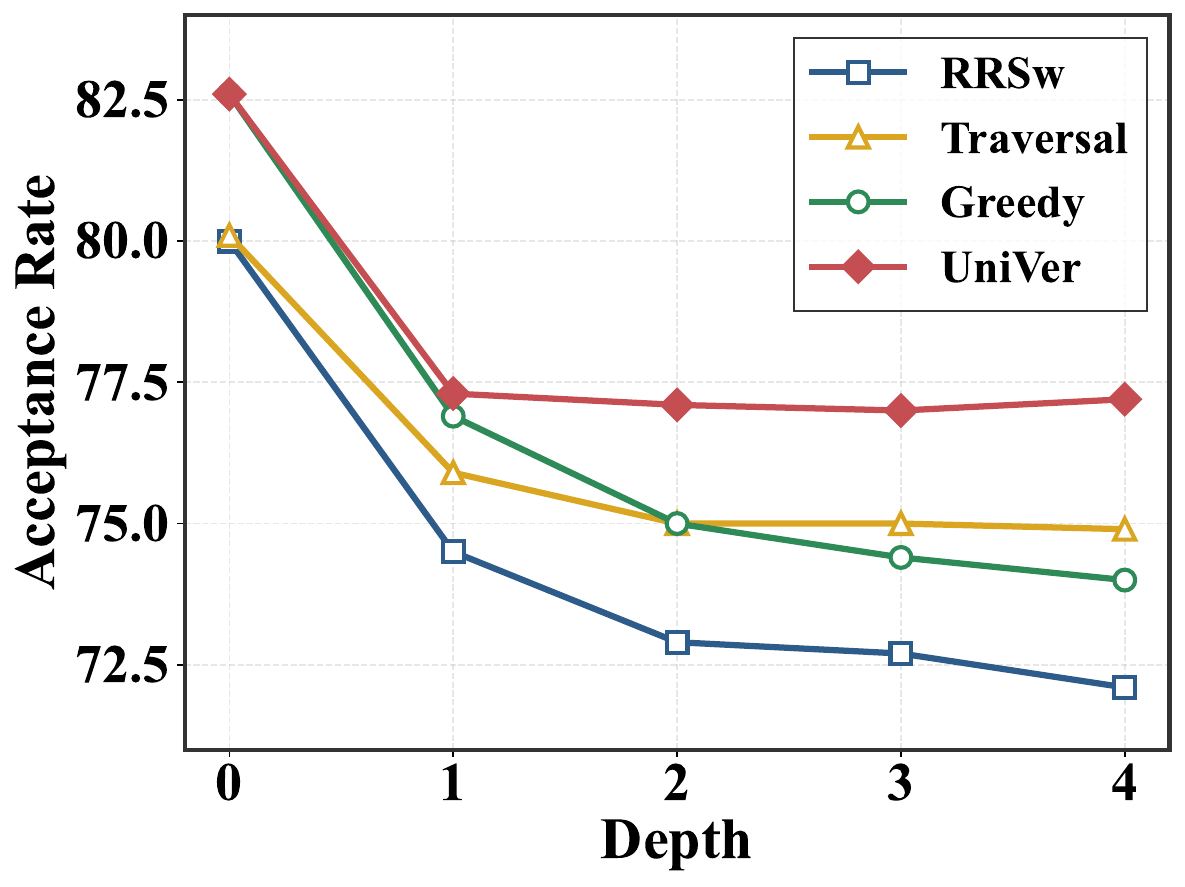}\caption{Per-depth acceptance rates on Vicuna-7B-v1.3 (binary tree, depth 5).}
\label{fig:per-depth}
\end{minipage}
\end{figure}

In terms of wall-clock efficiency, since UniVer only uses one sampled token with other tokens selected deterministically, the acceptance probabilities of all candidates admit closed-form expressions without using OT solvers. The overall latency of UniVer is therefore comparable to that of standard RRSw, and the acceptance gains directly translate into throughput improvements. To contextualize these numbers: for Vicuna-7B-v1.3, while UniVer improves acceptance length by 7.5\%, the end-to-end throughput gain is approximately 7\%. 

Similar trends are observed on Llama3.1-8B-Instruct (Table~\ref{tab:llama3-results}), confirming its robustness across different architectures. For full experimenetal results, please refer to Appendix~\ref{sec:add_results}.

\subsection{Scaling with Tree Size}

\begin{minipage}[t]{0.48\textwidth}
As shown in Figure~\ref{fig:scaling}, the advantage of UniVer becomes more pronounced as the tree size and depth increase. Compared to single-step OT (Greedy method), UniVer achieves larger gains when the tree gets deeper. This trend arises because UniVer's conditional OT framework effectively propagates probability mass through hierarchical dependencies, whereas Greedy applies independent single-step optimization at each node, as the local truncation of probabilities compounds across hierarchical levels. Similarly, as depth grows, Traversal demonstrates increasing advantages over Vanilla RRSw, validating that sequence-level verification benefits from deeper trees. However, UniVer consistently outperforms Traversal across all depths, as conditional OT achieves superior horizontal draft selection compared to RRSw's local pairwise comparisons.
\end{minipage}
\hfill
\begin{minipage}[t]{0.48\textwidth}
\vspace{-18pt}
\begin{figure}[H]
\centering
\includegraphics[width=\linewidth]{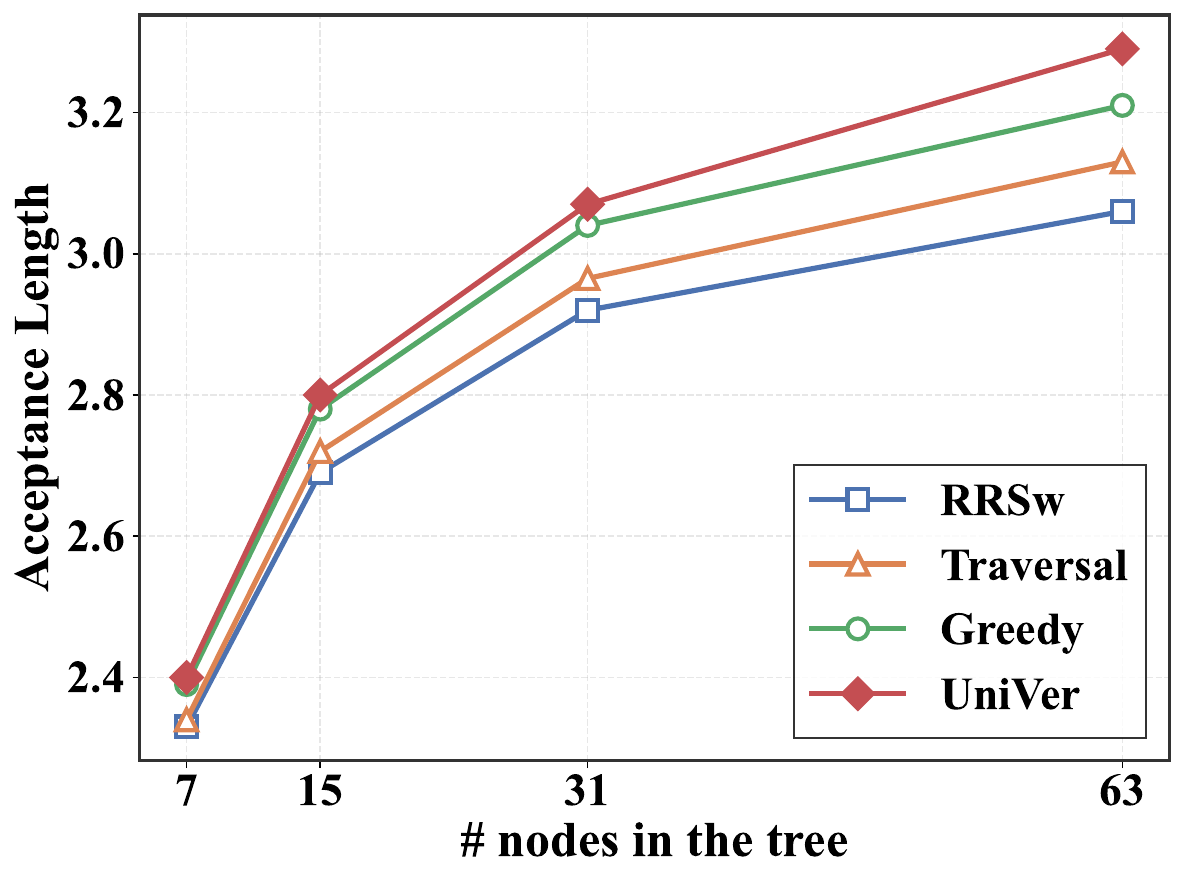}
\caption{Acceptance length as a function of binary tree size (number of nodes) on Vicuna-7B-v1.3. Tree depth ranges from 2 to 5, corresponding to 7, 15, 31, and 63 nodes respectively. The temperature is set to 1.0.}
\label{fig:scaling}
\end{figure}
\end{minipage}

\subsection{Effect of Tree Topology and Temperature} 

Table~\ref{tab:ablation-temp-tree} examines how UniVer performs across varying tree architectures and sampling temperatures. 

\paragraph{Impact of Temperature.} 
As the temperature decreases, the output distributions become more concentrated, causing the performance gap between verification methods to narrow. At temperature 0.3, Traversal, Greedy, and UniVer achieve nearly identical acceptance lengths (\emph{e.g.,} 3.39 for the binary tree with depth = 5 configuration), converging toward the same outcome as observed at temperature 0 (3.40), where the deterministic nature renders these verification methods equivalent. Conversely, at higher temperatures, where distributions are more dispersed, UniVer demonstrates more pronounced advantages over existing methods.

\paragraph{Impact of Tree Topology.} 
UniVer maintains consistent improvements across diverse tree structures, including balanced trees (2 drafts $\times$ 5 steps and 3 drafts $\times$ 3 steps) and the imbalanced EAGLE tree. While the absolute gain varies with topology (e.g., 7.5\% on the balanced 2x5 tree versus 4.2\% on the EAGLE tree at temperature 1.0), UniVer consistently achieves the highest acceptance lengths. This variation arises because the optimal transport formulation allocates probability mass globally across siblings, whereas the relative benefit of such global coordination depends on the specific branching structure and depth distribution of the tree.

\begin{table}[t]
\centering
\small
\caption{Impact of temperature and tree topology on acceptance length ($\tau$) using Vicuna-7B-v1.3. Results show mean{\tiny ±std}. $\Delta$ indicates relative improvement over Vanilla RRSw.}
\begin{tabular}{@{}ll|cc|cc|cccccc@{}}
\toprule
\multirow{2}{*}{\textbf{Method}} & & \multicolumn{2}{c|}{\textbf{Drafts=2 Steps=5}} & \multicolumn{2}{c|}{\textbf{Drafts=3 Steps=3}} & \multicolumn{2}{c}{\textbf{EAGLE Tree}} \\
& & Avg. $\tau$ & $\Delta$ & Avg. $\tau$ & $\Delta$ & Avg. $\tau$ & $\Delta$ \\
\midrule
\multicolumn{8}{c}{Temperature = 0.3} \\
\midrule
\multirow{2}{*}{\textbf{RRSw-based}} 
 & Vanilla RRSw & 3.34{\tiny ±0.01} & 0.0\% & 3.04{\tiny ±0.01} & 0.0\% & 3.33{\tiny ±0.01} & 0.0\% \\
 & Traversal & \textbf{3.39{\tiny ±0.01}} & 1.5\% & 3.08{\tiny ±0.01} & 1.3\% & \textbf{3.38{\tiny ±0.01}} & 1.5\% \\
\multirow{2}{*}{\textbf{OT-based}} 
 & Greedy & \textbf{3.39{\tiny ±0.01}} & 1.5\% & 3.08{\tiny ±0.01} & 1.3\% & \textbf{3.38{\tiny ±0.01}} & 1.5\% \\
 & UniVer & \textbf{3.39{\tiny ±0.01}} & 1.5\% & \textbf{3.09{\tiny ±0.01}} & 1.6\% & \textbf{3.38{\tiny ±0.01}} & 1.5\% \\
\midrule
\multicolumn{8}{c}{Temperature = 0.6} \\
\midrule
\multirow{2}{*}{\textbf{RRSw-based}} 
 & Vanilla RRSw & 3.29{\tiny ±0.01} & 0.0\% & 3.02{\tiny ±0.01} & 0.0\% & 3.29{\tiny ±0.01} & 0.0\% \\
 & Traversal & 3.32{\tiny ±0.01} & 0.9\% & 3.05{\tiny ±0.01} & 1.0\% & 3.33{\tiny ±0.01} & 1.2\% \\
\multirow{2}{*}{\textbf{OT-based}} 
 & Greedy & 3.34{\tiny ±0.02} & 1.5\% & 3.07{\tiny ±0.01} & 1.7\% & 3.34{\tiny ±0.01} & 1.5\% \\
 & UniVer & \textbf{3.37{\tiny ±0.01}} & 2.4\% & \textbf{3.09{\tiny ±0.01}} & 2.3\% & \textbf{3.35{\tiny ±0.01}} & 1.8\% \\
\midrule
\multicolumn{8}{c}{Temperature = 1.0} \\
\midrule
\multirow{2}{*}{\textbf{RRSw-based}} 
 & Vanilla RRSw & 3.06{\tiny ±0.01} & 0.0\% & 2.89{\tiny ±0.01} & 0.0\% & 3.11{\tiny ±0.01} & 0.0\% \\
 & Traversal & 3.13{\tiny ±0.02} & 2.3\% & 2.91{\tiny ±0.01} & 0.7\% & 3.16{\tiny ±0.01} & 1.6\% \\
\multirow{2}{*}{\textbf{OT-based}} 
 & Greedy & 3.21{\tiny ±0.02} & 4.9\% & 3.00{\tiny ±0.01} & 3.8\% & 3.21{\tiny ±0.01} & 3.2\% \\
 & UniVer & \textbf{3.29{\tiny ±0.01}} & 7.5\% & \textbf{3.03{\tiny ±0.01}} & 4.8\% & \textbf{3.24{\tiny ±0.02}} & 4.2\% \\
\midrule
\multicolumn{2}{c|}{Reference (Temperature = 0)} & 3.40 & - & 3.09 & - & 3.40 & - \\
\bottomrule
\end{tabular}
\label{tab:ablation-temp-tree}
\end{table}

\section{Related Work}

\textbf{Speculative Decoding.} 
Introduced by \citet{leviathan2023fast} and \citet{chen2023accelerating}, speculative decoding accelerates LLM inference via a draft-then-verify paradigm. Research in this area primarily focuses on two directions: (1) improving draft quality and tree structures, including architectural innovations such as EAGLE series~\cite{li2024eagle, li2025eagle3}, Medusa~\cite{cai2024medusa} and tree topology design, like EAGLE-2~\cite{LiEAGLE2} and Sequoia~\cite{Chen2024Sequoia}; (2) designing verification algorithms, which is most relevant to this work.

\textbf{Multi-Step Verification.}
Methods in this direction extend speculative decoding beyond single-step generation to handle sequential dependencies. For single-chain decoding, Block Verification~\cite{sun2025block} and ASpS~\cite{hu2024accelerated} identify the sub-optimality of local (per-step) verification and propose sequence-level alternatives that maximize expected acceptance length along the chain. \citet{weng2025traversal} further extend this to tree structures through Traversal Verification, which processes nodes bottom-up.

\textbf{Multi-Draft Verification.}
Complementary to multi-step methods, this direction focuses on optimizing the acceptance rate for multiple candidates generated at a single step. SpecTr~\cite{sun2023spectr} formulates verification as an optimal transport problem and has been further improved by subsequent works~\cite{Khisti2025MultiDraftSS, thomas2025global}. SpecInfer~\cite{miao2024specinfer} employs Recursive Rejection Sampling (RRS), later refined to RRSw~\cite{Jeon2024RSD,Yang2024MCSD,Chen2024Sequoia} to prevent redundant sampling of identical tokens. Beyond standard sampling strategies, SpecHub~\cite{sun2024spechub} and Greedy method~\cite{hu2025towards} propose hybrid approaches that deterministically select high-probability candidates with stochastic sampled tokens, with the latter achieving the theoretical optimal acceptance rate under its drafting strategy within single-step.

\section{Conclusion}
\label{sec:conclusion}

In this work, we addressed the fragmentation in speculative decoding verification, where existing methods treat multi-draft and multi-step optimizations in isolation. We revealed that longitudinal dependencies can be formulated as dynamic scaling factors via prefix acceptance probabilities and framed tree-based verification as a conditional OT problem. 

Building on this theoretical foundation, we proposed UniVer, an algorithm that composes local optimal transport plans under prefix constraints to jointly optimize across tree levels. We proved that UniVer achieves the optimal acceptance rate within this conditional framework while maintaining strict losslessness. Empirical results demonstrate that UniVer significantly improves acceptance length over existing methods, validating the efficacy of our unified verification strategy.

\paragraph{Limitations and Future Work.}
Distinct verification methods only exhibit performance differences when the sampling temperature is above zero. Therefore, UniVer possesses no extra performance gain under or near temperature = 0. Beyond this universal boundary, the realized acceptance length is intrinsically linked to the interplay between tree topology and sampling strategy. UniVer relies on specific mixed sampling strategy to derive OT-based verification, while co-designing the sampling strategy, verification mechanism, and the tree structure to maximize longitudinal mass propagation remains a promising direction for future research.

\bibliographystyle{acl_natbib}  %plainnat,abbrvnat,unsrtnat
\small
\bibliography{Reference}
\normalsize

%%%%%%%%%%%%%%%%%%%%%%%%%%%%%%%%%%%%%%%%%%%%%%%%%%%%%%%%%%%%

\appendix

\section{Additional Experimental Results}
\label{sec:add_results}
We provide additional experimental results on Vicuna-13B-v1.3, Vicuna-33B-v1.3 and Qwen2-7B-Instruct with their corresponding EAGLE draft model in Table \ref{tab:v13b-results}, Table~\ref{tab:v33b-results}, and Table \ref{tab:qwen-results}. We also include experiments on Llama2-7B~\cite{llama2} with Llama-68M~\cite{miao2024specinfer} as the draft model in Table \ref{tab:llama2-results}.

\begin{table}[!h]
\centering
\caption{Acceptance length ($\tau$) and throughput on Vicuna-13B-v1.3. Autoregressive throughput baseline: 22.3 tokens/s. Best results are highlighted in \textbf{bold}.}
\small
\setlength{\tabcolsep}{2pt}
\begin{tabular}{@{}llcccccc|c|cc@{}}
\toprule
\multicolumn{2}{@{}l}{\textbf{Method}} & \textbf{MT} & \textbf{Trans.} & \textbf{Summ.} & \textbf{QA} & \textbf{Math} & \textbf{RAG} & \textbf{Avg. $\tau$ ($\Delta$)} & \textbf{Avg. TPS} \\
\midrule
\multirow{2}{*}{\textbf{RRSw-based}} 
 & RRSw & 3.57{\tiny ±0.06} & 2.81{\tiny ±0.03} & 3.17{\tiny ±0.03} & 2.77{\tiny ±0.05} & 3.63{\tiny ±0.08} & 3.11{\tiny ±0.05} & 3.17{\tiny ±0.02} (0.0\%) & 44.3{\tiny ±0.2} \\
 & Traversal & 3.60{\tiny ±0.05} & 2.82{\tiny ±0.02} & 3.19{\tiny ±0.03} & 2.85{\tiny ±0.05} & 3.71{\tiny ±0.05} & 3.15{\tiny ±0.04} & 3.22{\tiny ±0.01} (1.6\%$\uparrow$) & 44.9{\tiny ±0.1} \\
\midrule
\multirow{2}{*}{\textbf{OT-based}} 
 & Greedy & 3.67{\tiny ±0.02} & 2.92{\tiny ±0.04} & 3.27{\tiny ±0.01} & 2.91{\tiny ±0.04} & 3.79{\tiny ±0.03} & 3.28{\tiny ±0.04} & 3.30{\tiny ±0.02} (4.1\%$\uparrow$) & 45.7{\tiny ±0.3} \\
 & UniVer & \textbf{3.74{\tiny ±0.02}} & \textbf{2.97{\tiny ±0.03}} & \textbf{3.31{\tiny ±0.03}} & \textbf{2.97{\tiny ±0.01}} & \textbf{3.84{\tiny ±0.10}} & \textbf{3.30{\tiny ±0.03}} & \textbf{3.35{\tiny ±0.01}} (5.7\%$\uparrow$) & \textbf{46.5{\tiny ±0.4}} \\
\bottomrule
\end{tabular}
\label{tab:v13b-results}
\end{table}

\begin{table}[!h]
\centering
\caption{Acceptance length ($\tau$) and throughput on Vicuna-33B-v1.3. Autoregressive throughput baseline: 9.2 tokens/s. Best results are highlighted in \textbf{bold}.}
\small
\setlength{\tabcolsep}{2pt}
\begin{tabular}{@{}llcccccc|c|cc@{}}
\toprule
\multicolumn{2}{@{}l}{\textbf{Method}} & \textbf{MT} & \textbf{Trans.} & \textbf{Summ.} & \textbf{QA} & \textbf{Math} & \textbf{RAG} & \textbf{Avg. $\tau$ ($\Delta$)} & \textbf{Avg. TPS} \\
\midrule
\multirow{2}{*}{\textbf{RRSw-based}} 
 & RRSw & 3.38{\tiny ±0.04} & 2.68{\tiny ±0.01} & 2.97{\tiny ±0.01} & 2.66{\tiny ±0.02} & 3.70{\tiny ±0.01} & 2.94{\tiny ±0.02} & 3.05{\tiny ±0.02} (0.0\%) & 20.6{\tiny ±0.1} \\
 & Traversal & 3.41{\tiny ±0.01} & 2.72{\tiny ±0.01} & 3.01{\tiny ±0.01} & 2.68{\tiny ±0.04} & 3.73{\tiny ±0.04} & 2.98{\tiny ±0.03} & 3.09{\tiny ±0.01} (1.1\%$\uparrow$) & 20.8{\tiny ±0.1} \\
\midrule
\multirow{2}{*}{\textbf{OT-based}} 
 & Greedy & 3.50{\tiny ±0.01} & 2.80{\tiny ±0.02} & 3.07{\tiny ±0.01} & 2.78{\tiny ±0.01} & 3.82{\tiny ±0.03} & 3.05{\tiny ±0.03} & 3.17{\tiny ±0.01} (3.8\%$\uparrow$) & 21.2{\tiny ±0.1} \\
 & UniVer & \textbf{3.52{\tiny ±0.01}} & \textbf{2.83{\tiny ±0.02}} & \textbf{3.11{\tiny ±0.03}} & \textbf{2.81{\tiny ±0.06}} & \textbf{3.86{\tiny ±0.01}} & \textbf{3.10{\tiny ±0.05}} & \textbf{3.21{\tiny ±0.01}} (4.9\%$\uparrow$) & \textbf{21.4{\tiny ±0.1}} \\
\bottomrule
\end{tabular}
\label{tab:v33b-results}
\end{table}

\begin{table}[!h]
\centering
\caption{Acceptance length ($\tau$) and throughput on Qwen2-7B-Instruct. Autoregressive throughput baseline: 37.0 tokens/s. Best results are highlighted in \textbf{bold}.}
\small
\setlength{\tabcolsep}{2pt}
\begin{tabular}{@{}llcccccc|c|cc@{}}
\toprule
\multicolumn{2}{@{}l}{\textbf{Method}} & \textbf{MT} & \textbf{Trans.} & \textbf{Summ.} & \textbf{QA} & \textbf{Math} & \textbf{RAG} & \textbf{Avg. $\tau$ ($\Delta$)} & \textbf{Avg. TPS} \\
\midrule
\multirow{2}{*}{\textbf{RRSw-based}} 
 & RRSw & 2.09{\tiny ±0.01} & 2.57{\tiny ±0.03} & 1.44{\tiny ±0.01} & 2.37{\tiny ±0.03} & 3.09{\tiny ±0.02} & 1.48{\tiny ±0.02} & 2.17{\tiny ±0.01} (0.0\%) & 43.7{\tiny ±0.3} \\
 & Traversal & 2.14{\tiny ±0.01} & 2.61{\tiny ±0.03} & 1.45{\tiny ±0.01} & 2.43{\tiny ±0.04} & 3.12{\tiny ±0.03} & 1.50{\tiny ±0.02} & 2.21{\tiny ±0.01} (1.8\%$\uparrow$) & 44.0{\tiny ±0.3} \\
\midrule
\multirow{2}{*}{\textbf{OT-based}} 
 & Greedy & 2.15{\tiny ±0.02} & 2.64{\tiny ±0.04} & 1.49{\tiny ±0.02} & 2.46{\tiny ±0.05} & 3.21{\tiny ±0.01} & 1.53{\tiny ±0.02} & 2.25{\tiny ±0.02} (3.7\%$\uparrow$) & 44.7{\tiny ±0.3} \\
 & UniVer & \textbf{2.20{\tiny ±0.01}} & \textbf{2.70{\tiny ±0.04}} & \textbf{1.50{\tiny ±0.01}} & \textbf{2.57{\tiny ±0.04}} & \textbf{3.26{\tiny ±0.02}} & \textbf{1.56{\tiny ±0.02}} & \textbf{2.30{\tiny ±0.01}} (6.0\%$\uparrow$) & \textbf{45.8{\tiny ±0.2}} \\
\bottomrule
\end{tabular}
\label{tab:qwen-results}
\end{table}

\begin{table}[!h]
\centering
\caption{Acceptance length ($\tau$) and throughput on Llama2-7B. Autoregressive throughput baseline: 37.3 tokens/s. Best results are highlighted in \textbf{bold}.}
\small
\setlength{\tabcolsep}{2pt}
\begin{tabular}{@{}llcccccc|c|cc@{}}
\toprule
\multicolumn{2}{@{}l}{\textbf{Method}} & \textbf{MT} & \textbf{Trans.} & \textbf{Summ.} & \textbf{QA} & \textbf{Math} & \textbf{RAG} & \textbf{Avg. $\tau$ ($\Delta$)} & \textbf{Avg. TPS} \\
\midrule
\multirow{2}{*}{\textbf{RRSw-based}} 
 & RRSw & 2.47{\tiny ±0.03} & 2.41{\tiny ±0.04} & 2.15{\tiny ±0.03} & 2.60{\tiny ±0.02} & 2.45{\tiny ±0.06} & 2.52{\tiny ±0.11} & 2.43{\tiny ±0.04} (0.0\%) & 57.9{\tiny ±1.3} \\
 & Traversal & 2.55{\tiny ±0.04} & 2.49{\tiny ±0.09} & 2.22{\tiny ±0.04} & 2.65{\tiny ±0.06} & 2.58{\tiny ±0.06} & 2.69{\tiny ±0.02} & 2.53{\tiny ±0.03} (4.1\%$\uparrow$) & 59.6{\tiny ±1.1} \\
\midrule
\multirow{2}{*}{\textbf{OT-based}} 
 & Greedy & 2.48{\tiny ±0.04} & 2.52{\tiny ±0.07} & 2.27{\tiny ±0.03} & 2.68{\tiny ±0.04} & 2.53{\tiny ±0.06} & 2.71{\tiny ±0.04} & 2.53{\tiny ±0.03} (4.1\%$\uparrow$) & 59.1{\tiny ±1.0} \\
 & UniVer & \textbf{2.73{\tiny ±0.01}} & \textbf{2.59{\tiny ±0.09}} & \textbf{2.31{\tiny ±0.04}} & \textbf{2.84{\tiny ±0.04}} & \textbf{2.66{\tiny ±0.04}} & \textbf{2.77{\tiny ±0.07}} & \textbf{2.65{\tiny ±0.03}} (9.1\%$\uparrow$) & \textbf{62.5{\tiny ±0.4}} \\
\bottomrule
\end{tabular}
\label{tab:llama2-results}
\end{table}

\section{The Proof of Theorems}
\label{sec:appendix_proofs}

%\subsection{Mathematical Preliminaries and Notation}\label{sec:preliminaries}

\paragraph{Vocabulary and Sequences.}
Let $\Sigma$ denote a finite vocabulary. For any positive integer $L$, we use $\Sigma^L$ denote the set of all sequences (strings) of length $L$ over $\Sigma$. We write $a = (a_0, \dots, a_L) \in \Sigma^{L+1}$ to represent a sequence of length $|a| = L+1$, and $a_{[:\ell]}$ to represent the prefix $(a_0, \dots, a_\ell)$ consisting of the first $\ell$ elements of $a$. 

\subsection{Local losslessness of UniVer}

\begin{proof}[Proof of Theorem \ref{thm:loc-lossless-Uni}]
Considering the draft tree $T$ generated by \emph{Allocation Phase} of UniVer verification (see Section~\ref{sec:univer}), we observe that, for any fixed non-leaf node $v\in T$, the randomness of its children set $\mathcal{C}(v)=\{u_1,\dots,u_m\}$ only depends on the last sampling token $u_m\sim \M_s^\neg(\cdot\mid v)$, which means $\Pr(\text{draft }\mathcal{C}(v))= \M_s^\neg(u_m\mid v)$. 

Now we begin to prove $\mathbb{E}_{\mathcal{C}(v)}[p_v(t)] = \tilde{p}_v \M_b(t|v)$ holds for all $t\in \Sigma$.
First, observe that the normalization factor $Z_v$ in Equations~\eqref{eq:norm-factor} is actually independent of the sampled value $u_m$. We use $H_v=\{u_1,\dots,u_{m-1}\}$ represent the top-$(m-1)$ tokens, and then
\begin{align}
Z_v &= 1 - \tilde{p}_v + \sum_{x\in\Sigma}[\tilde{p}_v \M_b(x|v) - \M_s^\neg(x|v)]_+ \notag\\
&= 1 - \tilde{p}_v + \sum_{x\in H_v}\tilde{p}_v \M_b(x|v) + \sum_{x\notin H_v}[\tilde{p}_v \M_b(x|v) - \M_s^\neg(x|v)]_+ \notag\\
&= 1 - \tilde{p}_v\sum_{x\notin H_v}\M_b(x|v) + \sum_{x\notin H_v}[\tilde{p}_v \M_b(x|v) - \M_s^\neg(x|v)]_+\notag\\
&= 1 - \sum_{x\notin H_v}\min\{\tilde{p}_v \M_b(x|v), \M_s^\neg(x|v)\}. \label{eq:Z-constant}
\end{align}
For any $t\in \Sigma$,
we consider the following two cases.

\textbf{Case 1: $t \in H_v$ (deterministic nodes).} 
From Equation~\eqref{eq:greedy-top} and noting that $\M_s^\neg(t\mid v)=0$ for $t\in H_v$, we have
$$\mathbb{E}_{\mathcal{C}(v)}[p_v(t)] 
=\frac{\tilde{p}_v \M_b(t\mid v)}{Z_v}\cdot\mathbb{E}_{u_m}[1-p_v(u_m)].$$
By Equation \eqref{eq:greedy-sample}, we know
\begin{equation}
\mathbb{E}_{u_m}[1-p_v(u_m)] = 1 - \sum_{x\notin H_v}\min\left\{\M_s^\neg(x|v), \tilde{p}_v \M_b(x|v)\right\}. \notag
\end{equation}
The expectation term equals $Z_v$ (see Equation~\eqref{eq:Z-constant}). Thus $\mathbb{E}_{u_m}[p_v(t)] = \tilde{p}_v \M_b(t\mid v)$.
	
\textbf{Case 2: $t \notin H_v$ (residual vocabulary).}
Here $t$ may either be the sampled node $u_m$ or belong to the remaining residual set. Obviously,
\begin{align*}
&\mathbb{E}_{u_m}[p_v(t)] = \M_s^\neg(t\mid v)\cdot\min\left\{1, \frac{\tilde{p}_v \M_b(t\mid v)}{\M_s^\neg(t\mid v)}\right\} \\
&+ \frac{[\tilde{p}_v \M_b(t \mid v) - \M_s^\neg(t\mid v)]_+}{Z_v}\times \sum_{x\notin H_v \cup \{t\}}\M_s^\neg(x\mid v)\cdot (1-p_v(x)).
\end{align*}
Since 
\begin{align*}
&\sum_{x\notin H_v \cup \{t\}}\M_s^\neg(x\mid v)(1-p_v(x)) \\
=\;& \sum_{x\notin H_v \cup \{t\}} \M_s^\neg(x\mid v) - \sum_{x\notin H_v \cup \{t\}} \min\{\M_s^\neg(x\mid v), \tilde{p}_v \M_b(x\mid v)\}\\ 
=\;& Z_v + \min\{0, \tilde{p}_v \M_b(t\mid v) - \M_s^\neg(t\mid v)\}.
\end{align*}
Substituting back, we obtain
\begin{align*}
\mathbb{E}_{u_m}[p_v(t)] =\;& \min\left\{\M_s^\neg(t\mid v), \tilde{p}_v \M_b(t\mid v)\right\} + [\tilde{p}_v \M_b(t\mid v) - \M_s^\neg(t\mid v)]_+\\
=\;& \tilde{p}_v \M_b(t\mid v).
\end{align*}
Therefore, the draft tree generated in UniVer verification is a locally lossless tree.
\end{proof}

\subsection{Losslessness of two-stage verification}
\begin{proof}[Proof of Theorem \ref{thm:general-lossless}]
We prove it by induction on the number of parent nodes $n$ in tree $T$.

\paragraph{Base case ($n=1$):} The tree $T$ consists of only the root node $r$ and its children $\mathcal{C}(r) = \{u_1, \dots, u_m\}$. The depth of $T$ is $1$ (root at depth $0$, children at depth $1$), so we consider any output sequence $o = (r, w, y) \in \Sigma^3$ where $w, y \in \Sigma$.

In the Decision Phase, the traversal order is $(u_1, \dots, u_m, r)$. There are two cases for how the algorithm $\mathscr{A}$ can generate $o$: 
%There are two cases for the possible output sequence $o'\subseteq o$ of the algorithm $\mathscr{A}$:

\emph{Case 1: $w \in \mathcal{C}(r)$.} The algorithm $\mathscr{A}$ accepts the child node $w=u_i$ for some $1\le i\le m$ and then samples $y \sim \M_b(\cdot \mid r, w)$. The probability of accepting node $w$ is:
$$\tilde{p}_{u_i} \prod_{j< i}(1-\tilde{p}_{u_j}) = p_r(w),$$
where $p_r(w)$ is the marginal acceptance mass computed in the Allocation Phase. So the probability of $\mathscr{A}$ outputs $o$ equals
$$\Pr[\mathscr{A}=o \mid w \in \mathcal{C}(r)] = p_r(w) \M_b(y\mid r,w).$$
Then the contribution to the left-hand side of Equation~\eqref{eq:lossless-def} is
\begin{align*}
&\sum_{o' \subseteq o} \frac{\M_b(o) \Pr[\mathscr{A} = o'\mid T]}{\M_b(o')} \\ 
=\;&\Pr[\mathscr{A}=o \mid w \in \mathcal{C}(r)]\\
=\;& p_r(w) \M_b(y\mid r,w), \quad \text{(when $w\in\mathcal{C}(r)$)}   
\end{align*}

\emph{Case 2: $w \notin \mathcal{C}(r)$.} The algorithm $\mathscr{A}$ rejects all children and accepts the root node $r$, then samples $w$ from the residual distribution $\M_b(\cdot \mid r)$. The probability of accepting the root (fallback) is:
$$\prod_{j=1}^m (1-\tilde{p}_{u_j}) = 1 - \sum_{u \in \mathcal{C}(r)} p_r(u) = \sum_{u \notin \mathcal{C}(r)} p_r(u).$$
So the probability of $\mathscr{A}$ outputs $o'=(r,w)$ equals
\begin{align*}
&\Pr[\mathscr{A}=(r,w) \mid w \notin \mathcal{C}(r)] = \left(\sum_{u \notin \mathcal{C}(r)} p_r(u)\right) \frac{p_r(w)}{\sum_{u \notin \mathcal{C}(r)} p_r(u)} = p_r(w).
\end{align*}
Then the contribution to the left-hand side of Equation~\eqref{eq:lossless-def} is:
\begin{align*}
&\sum_{o' \subseteq o} \frac{\M_b(o) \Pr[\mathscr{A} = o'\mid T]}{\M_b(o')} \\ 
=\;& \M_b(y\mid r,w) \Pr[\mathscr{A} = (r,w) \mid w\notin\mathcal{C}(r)] \\
=\;& p_r(w) \M_b(y\mid r,w), \quad \text{(when $w\notin\mathcal{C}(r)$)}   
\end{align*}

Combining both cases, for any sequence $o=(r,w,y)$, we have:
\begin{align*}
&\sum_{T} p_{\rm draft}(T) \sum_{o' \subseteq o} \frac{\M_b(o)}{\M_b(o')} \Pr[\mathscr{A} = o'\mid T]\\
=\;& \sum_{T} p_{\rm draft}(T) \cdot p_r(w) \M_b(y\mid r,w) \\
=\;& \M_b(y\mid r,w)\mathbb{E}_{\mathcal{C}(r)} [p_r(w)].
\end{align*}
By the locally lossless property (Definition~\ref{def:loc-lossless}), for any specific token $w\in\Sigma$, we have $\mathbb{E}_{\mathcal{C}(r)}[p_r(w)] = \tilde{p}_r \M_b(w \mid r) = \M_b(w \mid r)$.
Thus, the base case $n=1$ holds.
    
\paragraph{Inductive step.}
Assume the theorem holds for all locally lossless trees with at most $K$ parent nodes. Consider a locally lossless tree $T$ with $K+1$ parent nodes. Let $v$ be a deepest parent node in $T$, which implies all children $\mathcal{C}(v)$ are leaves. Construct the reduced tree $T^{\rm sub} = T \setminus \mathcal{C}(v)$ by removing all children of $v$. Then $T^{\rm sub}$ contains exactly $K$ parent nodes and satisfies the induction hypothesis, that is for any sequence $o\in\Sigma^{L+1}$,
\begin{equation}
\small
\mathbb{E}_{T^{\rm sub}} \left[\sum_{o' \subseteq o} \frac{\M_b(o) \Pr[\mathscr{A} = o'\mid T^{\rm sub}]}{\M_b(o')}\right] = \M_b(o).
\label{eq:k-nodes-lossless}
\end{equation}

For any sequence $o'=(o_0, \dots, o_\ell)$, $\ell\leq L$, as the output of $\mathscr{A}(T,\M_s,\M_b)$. Then $o_{\ell-1}$ is the acceptance node in Decision Phase.
We divide into three cases to prove the following equation
\begin{align}
\mathbb{E}_{\mathcal{C}(v)} \left[\sum_{o' \subseteq o} \frac{\M_b(o) \Pr[\mathscr{A} = o'\mid T]}{\M_b(o')}\right] = \sum_{o' \subseteq o} \frac{\M_b(o) \Pr[\mathscr{A} = o'\mid T^{\rm sub}]}{\M_b(o')}.\label{eq:exp-equal}
\end{align}

\emph{Case 1: the acceptance node $o_{\ell-1}$ is a preceding node of $\mathcal{C}(v)$.} In this case, we obviously have 
$$\Pr[\mathscr{A} = o'\mid T]=\Pr[\mathscr{A} = o'\mid T^{\rm sub}]. \quad \forall o'$$
Then \eqref{eq:exp-equal} always holds for this case.

\emph{Case 2: the acceptance node $o_{\ell-1}\in\mathcal{C}(v) \cup \{v\}$.}
Since Case 1 occurs with the same probability for both the trees $T$ and $T^{\rm sub}$, we only need to consider the conditional probability under the condition that Case 1 does not occur.

If $o_{\ell-1}\in\mathcal{C}(v)$ (i.e., $o'=o$ and $\ell=L$), then the probability \begin{align*}
&\sum_{o'\subseteq o}\frac{\M_b(o) \Pr[\mathscr{A} = o'\mid T]}{\M_b(o')} = p_v(o_{L-1}) \M_b(o_L\mid o_{[:L-1]}), \; (o_{L-1}\in\mathcal{C}(v)).
\end{align*}
If $o_{\ell-1}=v$ (i.e., $o'=o_{[:L-1]}$ and $\ell=L-1$),
then the probability \begin{align*}
&\sum_{o'\subseteq o}\frac{\M_b(o) \Pr[\mathscr{A} = o'\mid T]}{\M_b(o')} \\
=\;& \Pr[\mathscr{A} = o'\mid T]{\M_b(o_L\mid o')} \\
=\;& p_v(o_{L-1}){\M_b(o_L\mid o_{[:L-1]})},\; (o_{L-1}\notin\mathcal{C}(v)).
\end{align*}
To sum up, by the locally lossless of $T$, we know the Equation~\eqref{eq:exp-equal}
\begin{align*}
&\mathbb{E}_{\mathcal{C}(v)} \left[\sum_{o' \subseteq o} \frac{\M_b(o) \Pr[\mathscr{A} = o'\mid T]}{\M_b(o')}\right] \notag\\
=\;& \M_b(o_L\mid o_{[:L-1]}) \cdot \mathbb{E}_{\mathcal{C}(v)}[p_v(o_{L-1})] \notag\\
\overset{\eqref{eq:loc-lossless}}{=}\;& \M_b(o_L\mid o_{[:L-1]}) \cdot \tilde{p}_v \M_b(o_L\mid v, o_{L-1})\\
=\;& \M_b(o_L\mid o_{[:L-1]}) \Pr[\mathscr{A} = (v,o_{L-1})\mid T^{\rm sub}] \\
=\;& \sum_{o' \subseteq o} \frac{\M_b(o) \Pr[\mathscr{A} = o'\mid T^{\rm sub}]}{\M_b(o')}
\end{align*}
holds for this case.

\emph{Case 3: the acceptance node $o_{\ell-1}$ is an after-node of $v$.} Since after rejection at node $v$, the remaining parts of $T$ and $T^{\rm sub}$ are identical under the Decision Phase. Therefore, we only need to prove that the expectation of Case 3 occurring for $T$ equals to the probability for $T^{\rm sub}$ rejecting $v$, i.e., to show that
\begin{align*}
\mathbb{E}_{\mathcal{C}(v)}\Pr\left( \mathscr{A} \text{ rejects } v \mid T \right) = \Pr\left( \mathscr{A} \text{ rejects } v \mid T^{\rm sub} \right).
\end{align*}

Observe that in the Decision Phase of $T^{\rm sub}$, when the algorithm reaches node $v$ (which is a leaf in $T^{\rm sub}$), it accepts $v$ with probability $\tilde{p}_v$ and rejects it with probability 
$$\Pr\left( \mathscr{A} \text{ rejects } v \mid T^{\rm sub}\right)=1-\tilde{p}_v.$$ 
In contrast, for the original tree $T$, when the algorithm traverses the local subtree $T_v = \{v\} \cup \mathcal{C}(v)$, the probability of rejecting all nodes in $T_v$ is:
\begin{align*}
&\mathbb{E}_{\mathcal{C}(v)}\Pr\left( \mathscr{A} \text{ rejects } v \mid T \right) \\
=\;&\mathbb{E}_{\mathcal{C}(v)} \left[\prod_{u \in \mathcal{C}(v)} (1-\tilde{p}_u) \cdot (1-\tilde{p}_v^{\rm res})\right]\\
=\;& \mathbb{E}_{\mathcal{C}(v)}\left[\frac{\left(1 - \sum_{u \in \mathcal{C}(v)} p_v(u)\right)\cdot p_v(\neg v)}{1 - \sum_{u \in \mathcal{C}(v)} p_v(u)}\right]\\
=\;& \mathbb{E}_{\mathcal{C}(v)}\left[ p_v(\neg v) \right].
\end{align*}
To avoid sacrificing the generality of this proof by using specific expressions (e.g., \eqref{eq:greedy-sample}--\eqref{eq:reject}), we solely rely on the identity $p_v(\neg v) = 1 - \sum_{u\in\Sigma}p_v(u)$ here. Together with the locally lossless property, we have
\begin{align*}
&\mathbb{E}_{\mathcal{C}(v)}\left[ p_v(\neg v) \right] = 1 -\sum_{u\in\Sigma} \mathbb{E}_{\mathcal{C}(v)}\left[p_v(u)\right] = 1 - \tilde{p}_v \sum_{u\in\Sigma}\M_b(u\mid v)=1 - \tilde{p}_v.
\end{align*}

Therefore, by the case analysis above, Equation~\eqref{eq:exp-equal} holds. Combining this with the observation that $\mathbb{E}_{T}[\cdot] = \mathbb{E}_{T^{\rm sub}}\left[\mathbb{E}_{\mathcal{C}(v)}[\cdot]\right]$, we can establish:

{\small
\begin{align*}
&\mathbb{E}_{T} \left[\sum_{o' \subseteq o} \frac{\M_b(o) \Pr[\mathscr{A} = o'\mid T]}{\M_b(o')}\right] \\
=\;&\mathbb{E}_{T^{\rm sub}} \mathbb{E}_{\mathcal{C}(v)}\left[\sum_{o' \subseteq o} \frac{\M_b(o) \Pr[\mathscr{A} = o'\mid T]}{\M_b(o')}\right] \\
\overset{\eqref{eq:exp-equal}}{=}\;& \mathbb{E}_{T^{\rm sub}} \left[\sum_{o' \subseteq o} \frac{\M_b(o) \Pr[\mathscr{A} = o'\mid T^{\rm sub}]}{\M_b(o')}\right] 
\overset{\eqref{eq:k-nodes-lossless}}{=} \M_b(o).
\end{align*}
}
\end{proof}

\subsection{Conditional Optimality of UniVer}
\begin{proof}[Proof of Theorem \ref{thm:optimality}]
Given any possible Children nodes $\mathcal{C}(v)\sim\M_s^\neg(\cdot\mid v)$ of the non-leaf node $v$, the (conditional) acceptance rate for $\mathcal{C}(v)=\{u_1,\dots,u_m\}$ is
\begin{align*}
&\alpha^*_{\rm UniVer} (\tilde{p}_v\M_b(\cdot \mid v), \M_s(\cdot \mid v)) \\
=\;& \E_{\mathcal{C}(v)}\left[\sum_{j=1}^m p_v(u_j)\right] = \sum_{j=1}^m \E_{\mathcal{C}(v)}\left[ p_v(u_j)\right].
\end{align*}
Note that for the first ${\rm Top}_{m-1}(\M_s(\cdot \mid v))$ tokens $t\in H_v:=\{u_1,\dots,u_{m-1}\}$, the expectation $$\E_{\mathcal{C}(v)}\left[p_v(t)\right] = \tilde{p}_v\M_b(t\mid v)$$
always holds by the local losslessness of $T$. For the sample node $u_m$, via Eq.~\eqref{eq:greedy-sample},
{\small
\begin{align*}
\E_{\mathcal{C}(v)}\left[p_v(u_m)\right]&= \sum_{u_m\in\Sigma} \M_s^\neg(u_m\mid v)p_v(u_m) \\
&= \sum_{u \in \Sigma} \min\{\tilde{p}_v\M_b(u \mid v), \M_s^{\neg}(u \mid v)\}.
\end{align*}}%
Therefore, the acceptance rate of UniVer equals to Eq.~\eqref{eq:cond-optimal}.

Obviously, under the constraint of local losslessness, the acceptance rate $\E_{\mathcal{C}(v)}\left[p_v(t)\right]= \tilde{p}_v\M_b(t\mid v)$ achieves the optimal upper bound for $t\in H_v$. 
In order to obtain the optimal acceptance rate $\E_{\mathcal{C}(v)}[p_v(u_m)]$ of the sample node $u_m$, we need to solve the following scaled optimal transport problem (Scaled-OT):

{\small
\begin{align*}
&\max \sum_{u\in \Sigma} \M_s^\neg (u\mid v) p_v(u) \\
&{\rm s.t.} \;\sum_{u\in\Sigma}  \M_s^\neg (u \mid v) p_v(t) = \tilde{p}_v \M_b(t \mid v), \;\;\forall t\in\Sigma.
\end{align*}}%
Equivalently, letting $\gamma_v(u,t):=\M_s^\neg (u \mid v) p_v(t)$, $\forall u\in\Sigma, t\in\Sigma^+:=\Sigma\cup\{\neg v\}$, the Scaled-OT problem is 
\begin{align}
&\max \sum_{u\in\Sigma} \gamma_v(u,u) \label{eq:scaled-ot}\\
{\rm s.t.} 
&\; \sum_{u\in\Sigma}  \gamma_v(u,\neg v) = 1-\tilde{p}_v, \notag\\&\;\sum_{u\in\Sigma}  \gamma_v(u,t) = \tilde{p}_v \M_b(t \mid v), \;\;\forall t\in\Sigma, \notag \\
&\;\sum_{t\in\Sigma^+}  \gamma_v(u,t) = \M_s^\neg (u \mid v), \;\;\forall u\in\Sigma, \notag\\
&\; \gamma_v(u,t)\geq 0,\;\;\forall (u,t)\in\Sigma\times\Sigma^+. \notag
\end{align}
For the scaled optimal transport problem \eqref{eq:scaled-ot}, the objective is to maximize the total mass on the diagonal $\sum_{u\in\Sigma} \gamma_v(u,u)$. By the marginal constraints, for each $u\in\Sigma$  we have:
\begin{equation}
\gamma_v(u,u) \leq \sum_{t\in\Sigma^+}\gamma_v(u,t) = \M_s^\neg(u|v),
\end{equation}
and similarly for the column marginal:
\begin{equation}
\gamma_v(u,u) \leq \sum_{u'\in\Sigma}\gamma_v(u',u) = \tilde{p}_v\M_b(u|v).
\end{equation}
Therefore, $$\gamma_v(u,u)\leq\min\{\M_s^\neg(u\mid v), \tilde{p}_v \M_b(u\mid v)\}$$
for each $u$, yielding the upper bound:
\begin{equation}
\sum_{u\in\Sigma}\gamma_v(u,u) \leq \sum_{u\in\Sigma}\min\{\M_s^\neg(u|v), \tilde{p}_v\M_b(u|v)\}.
\end{equation}
This bound is achieved by our UniVer and then we complete the proof of this conditional optimality theorem.
\end{proof}

\subsection{Superiority of UniVer over Greedy method}
\newtheorem{lemma}{Lemma}

We first establish a more general lemma for the \textbf{modified UniVer} where the root node $r$ is assigned an arbitrary effective acceptance probability $\tilde{p}_r \in [0,1]$. The standard UniVer corresponds to $\tilde{p}_r=1$.

\begin{lemma}[Scaled Superiority]\label{lem:Scaled-superiority}
For any draft tree $T$ with root $r$ and effective probability $\tilde{p}_r\in[0,1]$, let $N_{\rm Mod}(T)$ and $N_{\rm Greedy}(T)$ denote the acceptance lengths of \textbf{modified UniVer} and Greedy method, respectively. Then the following superiority holds:
\begin{equation}\label{eq:inductive-ineq}
\E_T[N_{\rm Mod}(T)] \geq \tilde{p}_r \cdot \E_T[N_{\rm Greedy}(T)].
\end{equation}
\end{lemma}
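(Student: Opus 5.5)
We argue by induction on the depth of $T$, using a recursive decomposition of the expected acceptance length at the root. For either algorithm, the expected number of accepted draft tokens equals the sum over non-root nodes $u$ of the probability that $u$ lies on the accepted path. For the Greedy method this splits as $\E_T[N_{\rm Greedy}(T)] = \sum_{u\in\mathcal{C}(r)} \E[p^{\rm G}_r(u)\,(1+\E[N_{\rm Greedy}(T_u)])]$. Here $p^{\rm G}_r(u)$ is the Greedy acceptance mass, i.e.\ the formulas \eqref{eq:greedy-sample}--\eqref{eq:greedy-res} with $\tilde{p}_r=1$, and $T_u$ is the subtree rooted at $u$.

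For modified UniVer, we first check that the probability that the accepted path passes through a child $u$ is exactly $p_r(u)$, as computed in the Allocation Phase. This should follow from the post-order structure together with the conditional normalization in Line~\ref{line:cond}: the subtree $T_u$, run with effective root probability $\tilde{p}_u$, terminates inside $T_u$ with total probability $\tilde{p}_u$. We prove that claim by a small induction mirroring Case~3 of the proof of Theorem~\ref{thm:general-lossless}, using $\E[p_u(\neg u)] = 1-\tilde{p}_u$. It then remains to prove the sharper statement
\[
\E[N_{\rm Mod}(T)] = \sum_{u\in\mathcal{C}(r)} \E\big[\tilde{p}_u\text{-weighted contribution}\big] \ge \tilde{p}_r\,\E[N_{\rm Greedy}(T)],
\]
where the modified run on $T_u$ is the modified UniVer with root probability $\tilde{p}_u$. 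Applying the induction hypothesis to each $T_u$ gives an expected length of at least $\tilde{p}_u\,\E[N_{\rm Greedy}(T_u)]$ inside $T_u$, plus the $p_r(u)$ contribution for $u$ itself. We need to account carefully for how the $\tilde{p}_u$-scaled subtree length relates to the unconditional probability $p_r(u)$. The cleanest formulation is probably in terms of the product of mass reaching $u$ and the conditional depth beyond it.

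The base case is a depth-$1$ tree, i.e.\ a single layer. Here we must show $\E[\sum_u p_r(u)] \ge \tilde{p}_r\,\E[\sum_u p^{\rm G}_r(u)]$. By Theorem~\ref{thm:optimality}, the left side equals $\alpha^*_{\rm UniVer}(\tilde{p}_r\M_b,\M_s)$. Because $\min\{\tilde{p}_r a, b\} \ge \tilde{p}_r \min\{a,b\}$ for $\tilde{p}_r\in[0,1]$, comparing termwise with \eqref{eq:greedy-acc} gives the inequality.

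For the inductive step, the key inequality is node-wise: $p_r(u) \ge \tilde{p}_r\, p^{\rm G}_r(u)$ for each child, ideally pathwise given $\mathcal{C}(r)$ and not only in expectation. Paired with the induction hypothesis on subtrees, this should chain multiplicatively. For the sampled child $u_m$ this is immediate from $\min\{1,\tilde{p}_r x\} \ge \tilde{p}_r\min\{1,x\}$. \textbf{The main obstacle} is the top-$(m-1)$ children, because their mass in \eqref{eq:greedy-top} involves the factor $(1-p_r(u_m))/Z_r$, which moves in the wrong direction when $\tilde{p}_r$ decreases. If the pathwise comparison fails, we will fall back to comparing expectations via local losslessness: $\E[p_r(t)] = \tilde{p}_r\M_b(t\mid r)$ for $t\in H_r$, matching exactly $\tilde{p}_r$ times Greedy's expectation $\M_b(t\mid r)$. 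Deterministic children are fixed, so their subtrees' contributions factor out of the expectation over $u_m$, and this is enough for those terms. For the sampled branch we then use the pathwise inequality. Summing both parts completes the induction, and setting $\tilde{p}_r=1$ gives Theorem~\ref{thm:univer_vs_greedy}.
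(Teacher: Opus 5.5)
Your proposal is correct and matches the paper's proof in substance. You decompose the expected length at the root, apply the induction hypothesis to each child subtree run with root probability $\tilde{p}_u$, and use $\Pr(\text{reach } u)\,\tilde{p}_u = p_r(u)$ (with each subtree terminating internally with expected probability $\tilde{p}_u$). You then compare deterministic children only in expectation via local losslessness, their subtree weights being constants, and use the pathwise bound $\min\{1,\tilde{p}_r x\}\ge \tilde{p}_r\min\{1,x\}$ for the sampled child, exactly as in the paper's Cases 1 and 2. The only difference is bookkeeping: the paper inducts on the number of parent nodes and peels off the first non-leaf child $v_j$, splitting $T$ into $T^{\rm new}$ and $T_{v_j}$, whereas you induct on depth and sum over all children at once, which is equivalent.
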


\begin{proof}[Proof of Lemma \ref{lem:Scaled-superiority}]
We proceed by induction on the number of parent nodes $n$ in $T$.

\paragraph{Base case ($n=1$):} 
The tree consists of only the root $r$ and its children $\mathcal{C}(r)$. By Theorem \ref{thm:optimality} (Conditional Optimality):
\begin{align*}
\E_T[N_{\rm Mod}(T)] &= \alpha^*_{\rm UniVer}(\tilde{p}_r\M_b(\cdot|r), \M_s(\cdot|r)) \\
&\geq \tilde{p}_r \cdot \alpha^*_{\rm Greedy}(\M_b(\cdot|r), \M_s(\cdot|r)) \\
&= \tilde{p}_r \cdot \E[N_{\rm Greedy}(T)],
\end{align*}
where the inequality follows from the formulas of the acceptance rate (Eq.~\eqref{eq:greedy-acc} and Eq.~\eqref{eq:cond-optimal} for $\tilde{p} \in [0,1]$).

\paragraph{Inductive step:}
Assume the lemma holds for all trees with at most $k$ parent nodes. Consider a tree $T$ with $k+1$ parent nodes ($k \geq 1$). Let $\mathcal{C}(r) = \{v_1, \dots, v_m\}$ be the children of the root, with $v_m$ being the sampled node. Since $n=k+1 \geq 2$, at least one $v_j$ is a non-leaf parent node. Without loss of generality, we let $v_j$ be the first non-leaf node in the $\mathcal{C}(r)$ (i.e., $v_j$ has descendants, while $v_1, \dots, v_{j-1}$ are leaves).

Define:
\begin{itemize}
    \item $T_{v_j}$: the subtree rooted at $v_j$ (including $v_j$ and all its descendants);
    \item $T^{\rm new}$: the tree obtained from $T$ by removing all descendants of $v_j$ (making $v_j$ a leaf node).
\end{itemize}
Note that $T_{v_j}$ and $T^{\rm new}$ both have at most $k$ parent nodes, satisfying the induction hypothesis.

By the law of total expectation, conditioning on whether nodes $v_1, \dots, v_j$ are accepted or rejected, we decompose the expected acceptance length of modified UniVer as:

{\small
\begin{equation}\label{eq:full-decomposition}
\begin{split}
&\E_{T}[N_{\rm Mod}(T)] = \E_{T^{\rm new}} \E_{T_{v_j}\setminus\{v_j\}} [N_{\rm Mod}(T)] \\
=\;& \sum_{i=1}^{j-1} \E_{T^{\rm new}}[\Pr(\text{accept } v_i) \cdot 1] \\
+\,& \E_{T^{\rm new}}\left[\Pr(\text{reach } v_j) \left(\tilde{p}_{v_j}+\E_{T_{v_j}}[N_{\rm Mod}(T_{v_j})\mid v_j]\right)\right] \\
+\,& \E_{T^{\rm new}}[\Pr(\text{reject } \{v_i\}_{i=1}^j) \cdot N_{\rm Mod}(T^{\rm new}\setminus \{v_i\}_{i=1}^j)] \\
\\
=\;& \E_{T^{\rm new}}[N_{\rm Mod}(T^{\rm new})] + \E_{T^{\rm new}}\left[\Pr(\text{reach } v_j) \cdot \E_{T_{v_j}}[N_{\rm Mod}(T_{v_j})\mid v_j]\right].
\end{split}
\end{equation}}%
where:
\begin{itemize}
    \item $\Pr(\text{accept } v_i) = \prod_{\ell=1}^{i-1}(1-\tilde{p}_{v_\ell}) \cdot \tilde{p}_{v_i} = p_r(v_i)$ is the marginal probability of accepting $v_i$;
    \item $\Pr(\text{reach } v_j) = \prod_{\ell=1}^{j-1}(1-\tilde{p}_{v_\ell})$;
    \item $\Pr(\text{reject } v_1, \dots, v_j) = \prod_{\ell=1}^{j}(1-\tilde{p}_{v_\ell})$;
    \item $\E_{T_{v_j}}[N_{\rm Mod}(T_{v_j})\mid v_j]$ is the expectation acceptance length of modified UniVer with a draft tree $T_{v_j}$ rooted at $v_j$.
\end{itemize}
Note that $T^{\rm new}$ contains at most $k$ parent nodes (since $v_j$ is now a leaf), and $T_{v_j}$ also contains at most $k$ parent nodes (since the original root $r\notin T_{v_j}$). Applying the induction hypothesis to both subtrees:

{\small
\begin{align}
\E_{T_{v_j}}[N_{\rm Mod}(T_{v_j})\mid v_j] &\geq \tilde{p}_{v_j} \cdot \E_{T_{v_j}}[N_{\rm Greedy}(T_{v_j})\mid v_j], \label{eq:subtree-bound}\\
\E_{T^{\rm new}}[N_{\rm Mod}(T^{\rm new})] &\geq \tilde{p}_r \cdot \E_{T^{\rm new}}[N_{\rm Greedy}(T^{\rm new})]. \label{eq:newtree-bound}
\end{align}}%
Substituting \eqref{eq:subtree-bound} and \eqref{eq:newtree-bound} into \eqref{eq:full-decomposition}, we obtain:
\begin{equation}\label{eq:intermediate}
\begin{split}
\E_{T}[N_{\rm Mod}(T)] \geq\ \tilde{p}_r \cdot \E_{T^{\rm new}}[N_{\rm Greedy}(T^{\rm new})] + \E_{T^{\rm new}}\left[p_r(v_j)\cdot \E_{T_{v_j}}[N_{\rm Greedy}(T_{v_j})\mid v_j]\right].
\end{split}
\end{equation}

To complete the proof, we analyze the two cases based on whether $v_j$ is a deterministic ${\rm Top}_{m-1}$ node or the sampled node $v_m$.

\paragraph{Case 1: $v_j \in \{v_1, \dots, v_{m-1}\}$ is a deterministic node.}
Since $v_j$ belongs to the deterministic top-$(m-1)$ set, its value is fixed and independent of the randomness in $T^{\rm new}$. Consequently, the subtree $T_{v_j}$ is deterministic conditioned on $v_j$, making $\E_{T_{v_j}}[N_{\rm Greedy}(T_{v_j})\mid v_j]$ a constant that can be factored out of the expectation over $T^{\rm new}$. By local losslessness, $\E_{T^{\rm new}}[p_r(v_j)] = \tilde{p}_r \M_b(v_j \mid r)$, which equals $\tilde{p}_r \E_{T^{\rm new}}[p_r^{\rm Greedy}(v_j)]$ since Greedy method also accepts $v_j$ with expectation $\E_{T^{\rm new}}[p_r^{\rm Greedy}(v_j)]=\M_b(v_j \mid r)$. Thus we can factor out/in the constant $\E_{T_{v_j}}[N_{\rm Greedy}(T_{v_j})\mid v_j]$ from the second term of \eqref{eq:intermediate}, yielding:

{\small
\begin{align*}
&\E_{T}[N_{\rm Mod}(T)] \\
\geq\;\;& \tilde{p}_r \E_{T^{\rm new}}[N_{\rm Greedy}(T^{\rm new})] + \tilde{p}_r \M_b(v_j \mid r) \cdot \E_{T_{v_j}}[N_{\rm Greedy}(T_{v_j})\mid v_j] \\
=\;\;& \tilde{p}_r \E_{T^{\rm new}}[N_{\rm Greedy}(T^{\rm new})] + \tilde{p}_r \E_{T^{\rm new}} \left[p_r^{\rm Greedy}(v_j)\E_{T_{v_j}}[N_{\rm Greedy}(T_{v_j})\mid v_j]\right]  \\
=\;\;& \tilde{p}_r \cdot \E_{T}[N_{\rm Greedy}(T)].
\end{align*}}%

\paragraph{Case 2: $v_j = v_m$ is the sampled node.}
For any token $u \sim \M_s^\neg(\cdot \mid r)$, the acceptance probability in UniVer is $p_r(u) = \min\{1, \frac{\tilde{p}_r\M_b(u\mid r)}{\M_s^\neg(u\mid r)}\}$, while in Greedy it is $p_r^{\rm Greedy}(u) = \min\{1, \frac{\M_b(u\mid r)}{\M_s^\neg(u\mid r)}\}$. Since $\tilde{p}_r \leq 1$, we have $p_r(u) \geq \tilde{p}_r \cdot p_r^{\rm Greedy}(u)$ for all $u$. Therefore, the second term of \eqref{eq:intermediate} implies
{\small
\begin{align*}
\E_{T^{\rm new}}\left[p_r(v_m) \cdot \E_{T_{v_m}}[N_{\rm Greedy}(T_{v_m})\mid v_j]\right] \geq\; \tilde{p}_r \E_{T^{\rm new}}\left[p_r^{\rm Greedy}(v_m) \cdot \E_{T_{v_m}}[N_{\rm Greedy}(T_{v_m})\mid v_j]\right].
\end{align*}}%
Substituting into \eqref{eq:intermediate} gives $\E_{T}[N_{\rm Mod}(T)] \geq \tilde{p}_r \cdot \E_{T}[N_{\rm Greedy}(T)]$, completing the induction.

By induction, Lemma~\ref{lem:Scaled-superiority} holds for all $n \geq 1$.
\end{proof}

\begin{proof}[Proof of Theorem \ref{thm:univer_vs_greedy}]
Setting $\tilde{p}_r = 1$ in Lemma \ref{lem:Scaled-superiority} yields exactly $\E[N_{\rm UniVer}(T)] \geq \E[N_{\rm Greedy}(T)]$, establishing the superiority of UniVer over Greedy method.
\end{proof}

\end{document}